\newif\ifIEEEFORMAT
\IEEEFORMATfalse

\ifIEEEFORMAT
  \documentclass[letterpaper,10pt,conference]{ieeeconf}
  \IEEEoverridecommandlockouts
\else
  \documentclass[10pt,journal]{IEEEtran}
\fi

\usepackage[T1]{fontenc}
\usepackage[utf8]{inputenc}

\ifIEEEFORMAT
  \usepackage{amsmath,amssymb,amsfonts}

  \usepackage{amsthm}
\else
  \usepackage{amsmath,amssymb,amsfonts,amsthm}
\fi

\usepackage{newtxtext,newtxmath}
\usepackage{microtype}

\usepackage{bm}
\usepackage[hidelinks]{hyperref}
\ifIEEEFORMAT
\else
  \usepackage{titlesec}
\fi
\usepackage{cite}

\usepackage{graphicx}

\usepackage{tikz}
\usetikzlibrary{arrows.meta,positioning,calc,fit,backgrounds}

\usepackage{array}

\newcommand{\PaperAuthorsPlain}
  {Antonio Franchi and Mirko Mizzoni}

\newcommand{\PaperAuthorsAffiliated}
  {Antonio Franchi$^{1,2}$ and Mirko Mizzoni$^{1}$}

\newcommand{\AffiliationOne}{%
  $^1$ Robotics and Mechatronics Department, Electrical Engineering,
  Mathematics, and Computer Science (EEMCS) Faculty, University of
  Twente, 7500 AE Enschede, The Netherlands,
  \texttt{schol@r-franchi.eu},
  \texttt{m.mizzoni@utwente.nl}.%
}

\newcommand{\AffiliationTwo}{%
  $^2$ Department of Computer, Control and Management Engineering,
  Sapienza University of Rome, 00185 Rome, Italy,
  \texttt{schol@r-franchi.eu}.%
}

\newcommand{\FundingStatement}{%
  This work was partially funded by the Horizon Europe research
  project AUTOASSESS under grant agreement No.~101120732.%
}

\ifIEEEFORMAT
\else
  \renewcommand{\thesection}{\arabic{section}}
  \renewcommand{\thesubsection}{\thesection.\arabic{subsection}}

  \titleformat{\section}
    {\normalfont\large\bfseries}{\thesection}{0.65em}{}
  \titleformat{\subsection}
    {\normalfont\normalsize\bfseries}{\thesubsection}{0.65em}{}

  \titlespacing*{\section}
    {0pt}{1.8ex plus 0.4ex minus 0.2ex}{0.6ex}
  \titlespacing*{\subsection}
    {0pt}{1.25ex plus 0.3ex minus 0.2ex}{0.4ex}

  \makeatletter
  \def\maketitle{%
    \twocolumn[{%
      \raggedright
      {\LARGE\bfseries \@title\par}
      \vspace{0.7em}
      {\normalsize \@author\par}
      \vspace{0.55em}
      {\footnotesize
        \AffiliationOne\par
        \AffiliationTwo\par
        \FundingStatement\par
      }
      \vspace{1.0em}
    }]%
  }
  \makeatother
\fi

\newtheorem{definition}{Definition}
\newtheorem{proposition}{Proposition}
\newtheorem{theorem}{Theorem}
\newtheorem{corollary}{Corollary}
\newtheorem{remark}{Remark}

\newcommand{\RealNumbers}{\mathbb{R}}
\newcommand{\ActuatorStateSpace}{\mathcal{V}}
\newcommand{\ActuatorOutputSpace}{\mathcal{X}}
\newcommand{\TaskSpace}{\mathcal{W}}
\newcommand{\actuatorCount}{n}
\newcommand{\taskDimension}{m}

\newcommand{\actuatorState}{v}
\newcommand{\actuatorOutput}{x}
\newcommand{\taskCoordinates}{w}
\newcommand{\actuatorStateRate}{\dot{\actuatorState}}
\newcommand{\actuatorOutputRate}{\dot{\actuatorOutput}}
\newcommand{\taskTangent}{\dot{\taskCoordinates}}
\newcommand{\actuatorVariation}{\delta \actuatorState}

\newcommand{\actuatorMap}{q}
\newcommand{\allocationMap}{\ell}
\newcommand{\physicalTaskMap}{f}
\newcommand{\physicalAllocator}{\sigma}
\newcommand{\outputSection}{s}
\newcommand{\pinvSection}{\outputSection_{\mathrm{PI}}}

\newcommand{\pinvSectionComponent}[1]{\outputSection_{\mathrm{PI},#1}}
\newcommand{\pinvAllocator}{\physicalAllocator_{\mathrm{PI}}}

\newcommand{\pinvAllocatorComponent}[1]{\physicalAllocator_{\mathrm{PI},#1}}

\newcommand{\pinvActuatorRateComponent}[1]{\dot{\physicalAllocator}_{\mathrm{PI},#1}}
\newcommand{\DactuatorMap}{D\actuatorMap}
\newcommand{\DallocationMap}{D\allocationMap}
\newcommand{\DphysicalTaskMap}{D\physicalTaskMap}
\newcommand{\DphysicalAllocator}{D\physicalAllocator}
\newcommand{\DoutputSection}{D\outputSection}
\newcommand{\DpinvSection}{D\pinvSection}

\newcommand{\allocationMatrix}{A}
\newcommand{\allocationPinv}{\allocationMatrix^{\dagger}}
\newcommand{\allocationMatrixT}{\allocationMatrix^{\mathsf T}}
\newcommand{\allocationColumn}{a}
\newcommand{\linearSpan}{\operatorname{span}}
\newcommand{\pinvRow}{b}
\newcommand{\matrixWithout}[1]{\allocationMatrix_{-#1}}
\newcommand{\nullProjector}{P_{N}}
\newcommand{\canonicalBasis}{e}
\newcommand{\localNullDirection}{h}
\newcommand{\fullSupportNullArray}{z}
\newcommand{\matrixRank}{\operatorname{rank}}
\newcommand{\matrixKernel}{\operatorname{ker}}
\newcommand{\matrixImage}{\operatorname{im}}
\newcommand{\diagonalMatrix}{\operatorname{diag}}
\newcommand{\transpose}{\mathsf T}
\newcommand{\identityMatrix}{I}
\newcommand{\identityMap}{\operatorname{id}}

\newcommand{\actuatorIndex}{i}
\newcommand{\otherIndex}{j}
\newcommand{\timeCoordinate}{t}
\newcommand{\crossingTime}{t_*}
\newcommand{\crossingState}{\actuatorState_*}
\newcommand{\crossingStateComponent}[1]{\actuatorState_{*,#1}}
\newcommand{\crossingTask}{\taskCoordinates_*}
\newcommand{\taskDirection}{d}
\newcommand{\crossingSlope}{c}
\newcommand{\zeroOutputHyperplane}{H}
\newcommand{\normalCoordinate}{\eta}

\newcommand{\deformationWidth}{\varepsilon}
\newcommand{\physicalFlattening}{\psi}
\newcommand{\outputFlattening}{\phi}
\newcommand{\orthantSmoothing}{\mu}
\newcommand{\orthantMargin}{\delta}
\newcommand{\orthantShift}{\alpha}
\newcommand{\auxiliaryScalar}{r}
\newcommand{\positiveFactor}{g}
\newcommand{\pullbackMetric}{\mathfrak{g}}

\newcommand{\COne}{C^1}
\newcommand{\littleO}{o}
\newcommand{\exponentialBase}{e}
\newcommand{\differential}{D}
\newcommand{\signum}{\operatorname{sign}}

\newcommand{\timeOffset}{\tau}

\newcommand{\pinvOutputRateComponent}[1]
  {\dot{\outputSection}_{\mathrm{PI},#1}}

\newcommand{\actuatorStateComponent}[1]
  {\actuatorState_{#1}}

\newcommand{\actuatorStateRateComponent}[1]
  {\dot{\actuatorState}_{#1}}
\newcommand{\actuatorOutputRateComponent}[1]
  {\dot{\actuatorOutput}_{#1}}

\newcommand{\ActuatorReversalSet}
    {\bigcup_{\actuatorIndex=1}^{\actuatorCount}
   \{\actuatorState\in\ActuatorStateSpace:
   \actuatorState_{\actuatorIndex}=0\}}

\newcommand{\scalarArgument}{\xi}
\newcommand{\orthantWeight}{\pi}

\newcommand{\crossingNeighborhood}{U} \newcommand{\separationMargin}{\gamma} \newcommand{\flatteningDeviationBound}{C_{\phi}}

\newcommand{\actuatorPosition}{p}
\newcommand{\actuatorDirection}{n}

\newcommand{\TangentSpace}[2]{T_{#1}#2}
\newcommand{\actuatorBasisDirection}{\canonicalBasis_{\actuatorIndex}}
\newcommand{\localSection}{\outputSection_{\deformationWidth,\actuatorIndex}^{\mathrm{loc}}}
\newcommand{\orthantSection}{\outputSection_{\orthantSmoothing,\orthantMargin}}
\newcommand{\orthantSectionComponent}{\outputSection_{\orthantSmoothing,\orthantMargin,\actuatorIndex}}
\newcommand{\normalSectionComponent}{\outputSection_{\actuatorIndex}}
\newcommand{\normalPhysicalComponent}{\physicalAllocator_{\actuatorIndex}}

\definecolor{stateblue}{RGB}{38,89,145}
\definecolor{outputorange}{RGB}{201,105,32}
\definecolor{taskgreen}{RGB}{48,124,78}
\definecolor{obstructionred}{RGB}{172,46,46}
\definecolor{remedyteal}{RGB}{22,130,132}
\definecolor{orthantpurple}{RGB}{112,78,145}
\definecolor{softgray}{RGB}{245,246,247}
\definecolor{linegray}{RGB}{95,101,106}

\tikzset{
  master panel/.style={
    draw=linegray!55,
    rounded corners=2pt,
    fill=white,
    inner sep=5pt
  },
  master title/.style={
    font=\bfseries\small,
    anchor=west
  },
  master space/.style n args={1}{
    draw=#1,
    very thick,
    rounded corners=2pt,
    fill=#1!6,
    minimum width=0.1\textwidth,
    minimum height=12mm,
    align=center,
    inner sep=4pt
  },
  master arrow/.style n args={1}{
    -{Latex[length=2mm]},
    very thick,
    draw=#1
  },
  master return/.style n args={1}{
    -{Latex[length=2mm]},
    semithick,
    draw=#1
  },
  master formula/.style={
    font=\scriptsize,
    align=center,
    text width=0.22\textwidth
  },
  master case/.style n args={1}{
    rounded corners=2pt,
    draw=#1!80,
    fill=#1!5,
    inner sep=4pt,
    align=left,
    font=\scriptsize,
    text width=0.285\textwidth
  },
  master mini/.style n args={1}{
    rounded corners=2pt,
    draw=#1!75,
    fill=#1!4,
    inner sep=4pt,
    align=center,
    font=\scriptsize,
    text width=0.205\textwidth,
    minimum height=31mm
  },
  master axis/.style={
    draw=linegray!75,
    thin,
    -{Latex[length=1.2mm]}
  },
  master note/.style={
    font=\scriptsize,
    align=center
  }
}

\title{\huge Differential Realizability of\\
Static Control Allocation in Multirotors:\\
An Impossibility under Nonredundant Full Actuation and\\
a Pseudoinverse Obstruction under Redundant Actuation}

\ifIEEEFORMAT
  \author{\PaperAuthorsPlain%
    \thanks{\AffiliationOne}%
    \thanks{\AffiliationTwo}%
    \thanks{\FundingStatement}%
  }
\else
  \author{\PaperAuthorsAffiliated}
\fi

\begin{document}

\ifIEEEFORMAT
\else
  \bstctlcite{BSTcontrol}
\fi

\maketitle

\begin{abstract}
Control allocation for multirotors with bidirectional propellers is
commonly formulated in signed-thrust variables, where the wrench map is
linear. The signed-quadratic map from physical rotor speed to thrust,
however, is not a local diffeomorphism at zero speed. This work derives
two distinct consequences. Under nonredundant full actuation, a
single-propeller reversal removes one instantaneous task direction;
hence, no global continuously differentiable exact static allocator
exists over the complete task space. Under redundant actuation, the
physical task map may remain regular, yet a transverse pseudoinverse
zero crossing requires an unbounded rotor-speed derivative. We define
differential realizability as regularity of the physical lift of an
actuator-output section, derive exact and first-order validity
conditions, and distinguish structural rank loss from an
allocator-induced rate singularity. A local nullspace deformation
repairs isolated pseudoinverse reversals, while a global fixed-orthant
construction establishes existence of regular sections at the cost of
persistent task-preserving internal actuation.
\end{abstract}

\ifIEEEFORMAT
\else
\begin{IEEEkeywords}
Aerial robotics, control allocation, redundant actuation, multirotor
systems, pseudoinverse, bidirectional propellers, actuator reversal.
\end{IEEEkeywords}
\fi

\section{Introduction}
\label{sec:introduction}

Fully actuated multirotors can generate a complete body wrench
independently of attitude, extending the motion and interaction
capabilities of conventional platforms whose force direction is tied
to attitude. Such architectures have been realized through fixedly
tilted propellers and tilting mechanisms
\cite{Ryll2015TCST,Aboudorra2024JINT}, as well as spatial arrangements
of fixed propellers with bidirectional
\cite{Brescianini2016ICRA} or unidirectional thrust
\cite{Tognon2018RAL}. Redundant actuation further allows multiple
actuator outputs to realize the same wrench, providing task-preserving
freedom for effort reduction, constraint satisfaction, failure
accommodation, and actuation shaping
\cite{Brescianini2018Mechatronics,Dyer2019ICRA,
Michieletto2018TRO,Aboudorra2024JINT}.

Beyond these representative architectures, fully actuated aerial robots
encompass fixed- and variable-tilt propulsion, vectorable-thrust
mechanisms, and reconfigurable or modular designs, as surveyed in
\cite{Rashad2020Review}. Representative systems include the
variable-tilt platform \cite{Kamel2018RAM}, morphology and
allocation co-design for efficient omnidirectional flight
\cite{Allenspach2020IJRR}, transformable articulated robots
\cite{Zhao2018RAL}, and thruster-tilting \(T^3\)-multirotors
\cite{Lee2021TMECH}. Modular arrangements have further been used to
adapt the available degrees of actuation to the task
\cite{Xu2025TASE}.

Complete wrench generation is particularly valuable in aerial physical
interaction, where force application must be regulated without
sacrificing the vehicle attitude. Applications include automated
screwing with a fully actuated tiltrotor
\cite{Schuster2022IROS}, omnidirectional manipulation supported by
elastic suspension \cite{Yigit2021RAL}, and infrastructure contact
inspection with a fully actuated tilted-propeller platform
\cite{SanchezCuevas2020Sensors}, within the broader development of
aerial manipulation surveyed in \cite{Ruggiero2018RAL}. The same
actuation freedom has also motivated emerging applications in
communications-aware trajectory design and cooperative jamming
\cite{BonillaLicea2024Globecom}. Across these architectures and
applications, the allocation layer connects nominal wrench authority
to the commands generated for the physical actuators.

Fixed-orientation platforms with reversible propellers provide a direct
route to redundant omnidirectional actuation, but require propellers to
decelerate to zero and reverse during operation
\cite{Brescianini2018Mechatronics,Park2018TMECH}. Frequent
thrust-direction changes also arise during agile omnidirectional flight
\cite{Lee2025RAL}. Allocation for these platforms is commonly
formulated in signed-thrust variables, where the wrench map is linear,
the Moore--Penrose pseudoinverse provides a deterministic minimum-norm
solution, and nullspace terms can address secondary objectives. For the
reversible octorotor of \cite{Brescianini2016ICRA}, this freedom has
been used to account for thrust constraints, power consumption,
actuator dynamics, and reversal behavior
\cite{Brescianini2018Mechatronics}; constrained energy-optimal
allocation has also been considered \cite{Dyer2019ICRA}.

The practical limitations of reversible propulsion have motivated
infinity-norm allocation and selective mapping near zero speed
\cite{Park2018TMECH}, mixed-integer allocation incorporating low-speed
deadbands \cite{Ali2026ICUAS}, temporal hysteresis
\cite{Brescianini2018Mechatronics}, and controllers that explicitly
include rotor dynamics \cite{Lee2025RAL}. Positive-only architectures
avoid reversals through suitable geometry and persistent internal
thrust \cite{Tognon2018RAL}. These approaches address reversal delay,
saturation, minimum sustainable speed, electronic-speed-controller
behavior, and actuator dynamics.

\begin{table}[t]
\centering
\caption{Two limitations hidden by pointwise notions of full actuation
at a single-propeller reversal.}
\label{tab:main-message}
\renewcommand{\arraystretch}{1.40}
\begin{tabular}{@{}p{0.45\columnwidth}p{0.50\columnwidth}@{}}
\hline
\textbf{Actuation and allocation}
&
\textbf{What can happen at the reversal}
\\
\hline
The multirotor is fully actuated but has no redundant actuation,
regardless of the allocator used
&
One instantaneous generalized-force direction is necessarily lost.
\\
\hline
The multirotor has redundant actuation and uses the standard
pseudoinverse allocator
&
All instantaneous generalized-force directions may remain physically
available, yet the commanded propeller rate can become unbounded.
\\
\hline
\end{tabular}
\end{table}

A more basic compatibility question arises before these nonidealities
are considered. The allocator selects signed thrust, whereas the
physical command is signed rotor speed, related to thrust by a
signed-quadratic law. Although its inverse is continuous and assigns a
unique rotor speed to every feasible thrust, it is not differentiable
at zero thrust. Consequently, a smooth and pointwise exact thrust
command can require an unbounded rotor-speed derivative at reversal.
No finite-rate actuator can track such a command exactly: a physical
realization must introduce tracking error, delay, or command
modification. This obstruction is invisible to conventional pointwise
tests of wrench feasibility and allocation exactness.

This work studies this distinction through differential realizability
of static allocation and reveals two fundamentally different
obstructions. First, in a nonredundant fully actuated platform, one
actuator reaching zero speed removes its first-order wrench
contribution. With no redundant actuator direction available to
replace it, the physical wrench map loses rank and no global
\(\COne\) exact static allocator can exist over the complete task
space, although particular trajectories may remain realizable. Second,
redundant actuation can preserve complete instantaneous wrench
authority during a single reversal when the remaining allocation
columns span the task space. The pseudoinverse can nevertheless cross a
zero-thrust hyperplane with nonzero derivative, requiring an unbounded
rotor-speed derivative even though another allocation section could
use the available redundancy to produce a finite-rate reversal.

The contributions are:
i) a structural impossibility result for global \(\COne\) exact static
allocation under nonredundant full actuation;
ii) a pseudoinverse reversal obstruction under redundant actuation,
despite regularity of the physical task map;
iii) exact and first-order conditions for differential realizability;
iv) a deterministic local nullspace deformation that preserves the
wrench, equals the pseudoinverse outside a tunable slab, and repairs
isolated reversals; and
v) a global fixed-orthant construction proving existence of regular
sections under a full-support nullspace condition and exposing the cost
of persistent task-preserving internal actuation.

The two central limitations are summarized in direct physical terms in
Table~\ref{tab:main-message} and formalized in the following sections.

\section{Problem Formulation}
\label{sec:problem-formulation}

Let
\(\actuatorState\in\ActuatorStateSpace=\RealNumbers^\actuatorCount\)
be the scaled signed rotor-speed array,
\(\actuatorOutput\in\ActuatorOutputSpace=\RealNumbers^\actuatorCount\)
the signed actuator-output array, and
\(\taskCoordinates\in\TaskSpace=\RealNumbers^\taskDimension\) the
commanded generalized-force coordinates, with
\(\actuatorCount\geq\taskDimension\). The componentwise
signed-quadratic actuator map
\(\actuatorMap:\ActuatorStateSpace\to\ActuatorOutputSpace\) and the
linear allocation map
\(\allocationMap:\ActuatorOutputSpace\to\TaskSpace\) are represented
in the selected coordinates by
\begin{equation}
 \actuatorOutput=\actuatorMap(\actuatorState),\quad
 \actuatorMap_{\actuatorIndex}
 (\actuatorState_{\actuatorIndex})
 =
 \actuatorState_{\actuatorIndex}
 |\actuatorState_{\actuatorIndex}|,\quad
 \taskCoordinates
 =
 \allocationMap(\actuatorOutput)
 =
 \allocationMatrix\actuatorOutput.
 \label{eq:model}
\end{equation}
Here, positive thrust coefficients are absorbed into the scaling of
\(\actuatorState\); this choice preserves the zero sets and
rank properties studied below. See \cite{Michieletto2018TRO} for the
derivation of this model. The matrix
\(\allocationMatrix\in
\RealNumbers^{\taskDimension\times\actuatorCount}\) represents
\(\allocationMap\) and is assumed to have full row rank and no zero
columns. 
The physical task map is
\begin{equation}
 \physicalTaskMap
 =
 \allocationMap\circ\actuatorMap:
 \ActuatorStateSpace\to\TaskSpace.
 \label{eq:physical-map}
\end{equation}
A static physical allocator is a right inverse
\(\physicalAllocator:\TaskSpace\to\ActuatorStateSpace\) of
\(\physicalTaskMap\), namely
\(\physicalTaskMap\circ\physicalAllocator
=\identityMap_{\TaskSpace}\). Its associated actuator-output section is
\(\outputSection=\actuatorMap\circ\physicalAllocator:
\TaskSpace\to\ActuatorOutputSpace\), which satisfies
\(\allocationMap\circ\outputSection=\identityMap_{\TaskSpace}\), or
\(\allocationMatrix\outputSection(\taskCoordinates)=\taskCoordinates\)
in the selected coordinates. Conversely, since \(\actuatorMap\) is a componentwise bijection, every actuator-output section \(\outputSection\) has the unique lift through \(\actuatorMap\), given componentwise by
\begin{equation}
 \physicalAllocator_{\actuatorIndex}(\taskCoordinates)
 =
 \actuatorMap_{\actuatorIndex}^{-1}
 \bigl(
 \outputSection_{\actuatorIndex}(\taskCoordinates)
 \bigr)
 =
 \signum\bigl(
 \outputSection_{\actuatorIndex}(\taskCoordinates)
 \bigr)
 \sqrt{
 \left|
 \outputSection_{\actuatorIndex}(\taskCoordinates)
 \right|
 }.
 \label{eq:lift}
\end{equation}
The central question is when this lift is \(\COne\).

The unconstrained minimum-norm section is represented in the selected
coordinates by
\begin{equation}
 \pinvSection(\taskCoordinates)=\allocationPinv\taskCoordinates,
 \quad
 \allocationPinv
 =
 \allocationMatrixT
 (\allocationMatrix\allocationMatrixT)^{-1}.
 \label{eq:pseudoinverse-section}
\end{equation}
Let \(\pinvRow_{\actuatorIndex}^{\transpose}\) denote the
\(\actuatorIndex\)th row of \(\allocationPinv\). This covector defines
the normal variable
\(\normalCoordinate_{\actuatorIndex}(\taskCoordinates)
:=\pinvRow_{\actuatorIndex}^{\transpose}\taskCoordinates
=\pinvSectionComponent{\actuatorIndex}(\taskCoordinates)\), whose zero
set is the hyperplane
\begin{equation}
 \zeroOutputHyperplane_{\actuatorIndex}
 =
 \normalCoordinate_{\actuatorIndex}^{-1}(0)
 =
 \left\{
 \taskCoordinates\in\TaskSpace:
 \pinvRow_{\actuatorIndex}^{\transpose}\taskCoordinates=0
 \right\}.
 \label{eq:zero-output-hyperplane}
\end{equation}
Since the \(\actuatorIndex\)th column
\(\allocationColumn_{\actuatorIndex}\) of \(\allocationMatrix\) is
nonzero, \(\pinvRow_{\actuatorIndex}\neq0\), and hence
\(\zeroOutputHyperplane_{\actuatorIndex}\) is a proper hyperplane of
\(\TaskSpace\).

\begin{definition}[Differentially realizable section]
A static physical allocator
\(\physicalAllocator:\TaskSpace\to\ActuatorStateSpace\) is called
\emph{valid} if it is \(\COne\). An actuator-output section
\(\outputSection:\TaskSpace\to\ActuatorOutputSpace\) is
\emph{differentially realizable} if its unique lift
\(\actuatorMap^{-1}\circ\outputSection:
\TaskSpace\to\ActuatorStateSpace\) is valid.
\end{definition}

Differential realizability is stronger than the pointwise condition
\(\allocationMap(\outputSection(\taskCoordinates))=\taskCoordinates\):
as \(\taskCoordinates\) varies, the points selected by
\(\outputSection\) from the corresponding fibers must admit a
\(\COne\) physical lift.

Figure~\ref{fig:master-map} organizes the problem and the main results.
Panel~(a) presents the commutative static-allocation diagram:
\(\physicalTaskMap=\allocationMap\circ\actuatorMap\),
\(\outputSection=\actuatorMap\circ\physicalAllocator\), and
\(\physicalTaskMap\circ\physicalAllocator
=\allocationMap\circ\outputSection
=\identityMap_{\TaskSpace}\).
The central question is whether this pointwise exact diagram also
admits a \(\COne\) physical allocator. Panel~(b) distinguishes
structural rank loss without redundancy from retained physical
regularity under redundant actuation. Panel~(c) illustrates the
pseudoinverse crossing obstruction and the local and global
constructions developed below.

\begin{figure*}[t]
\centering
\begin{tikzpicture}[font=\footnotesize]

\node[master panel, anchor=north, minimum width=0.975\textwidth,minimum height=53mm] (pa) at (0,0) {};
\node[master title] at ($(pa.north west)+(2mm,-3mm)$)
  {(a) Spaces, maps, and static sections};

\node[master space=stateblue] (V) at ($(pa.north west)+(0.1\textwidth,-2cm)$)
  {\textcolor{stateblue}{\(\ActuatorStateSpace\)}\\[-0.2ex]
   \scriptsize physical actuator-state space\\
   \(\actuatorState\in\RealNumbers^{\actuatorCount}\)};
\node[master space=outputorange] (X) at ($(pa.north)+(1.9cm,-2cm)$)
  {\textcolor{outputorange}{\(\ActuatorOutputSpace\)}\\[-0.2ex]
   \scriptsize signed actuator-output space\\
   \(\actuatorOutput\in\RealNumbers^{\actuatorCount}\)};
\node[master space=taskgreen] (W) at ($(pa.north east)+(-1.1cm,-2cm)$)
  {\textcolor{taskgreen}{\(\TaskSpace\)}\\[-0.2ex]
   \scriptsize task space\\
   \(\taskCoordinates\in\RealNumbers^{\taskDimension}\)};

  \draw[master arrow=outputorange,bend right=5]
  ($(V.east)!0.6!(V.south east)$) to
  node[above,font=\scriptsize,align=center]
  {\(\actuatorMap\): homeomorphism, \quad \(\actuatorMap_{\actuatorIndex}
     (\actuatorState_{\actuatorIndex})
     =
     \actuatorState_{\actuatorIndex}
     |\actuatorState_{\actuatorIndex}|\) \\[0.2ex]
   diffeomorphism on
   \(\ActuatorStateSpace\setminus\ActuatorReversalSet\)}
  node[below,font=\scriptsize,align=center]
  {
   \(\DactuatorMap(\actuatorState)\) singular on
   \(\ActuatorReversalSet\)}
  ($(X.west)!0.6!(X.south west)$);

\draw[master return=stateblue,bend right=4]
  ($(X.west)!0.7!(X.north west)$) to
  node[
    pos=0.5,
    above=-2pt,
    font=\scriptsize
  ]
  {\(\actuatorMap^{-1}:\quad 
  \actuatorMap_{\actuatorIndex}^{-1}
     (\actuatorOutput_{\actuatorIndex})
     =
     \signum(\actuatorOutput_{\actuatorIndex})\sqrt{|\actuatorOutput_{\actuatorIndex}|}\)}
  ($(V.east)!0.7!(V.north east)$);       

\draw[master arrow=taskgreen,bend right=10] ($(X.east)!0.5!(X.south east)$) to node[above,font=\scriptsize]
  {} node[below,font=\scriptsize,align=center]
  {\(\allocationMap :\allocationMap(\actuatorOutput)=\allocationMatrix\actuatorOutput\)
    } ($(W.west)!0.5!(W.south west)$);

\draw[master arrow=taskgreen] (V.south east) to[out=-65,in=245,looseness=0.26]
  node[above=-2pt,font=\scriptsize]
  {\(\physicalTaskMap=\allocationMap\circ\actuatorMap\)} (W.south west);

\draw[master return=outputorange,bend right=4]
  ($(W.west)!0.5!(W.north west)$) to
  node[
    pos=0.5,
    above=-2pt,
    font=\scriptsize
  ]
  {\(\outputSection:\ 
    \allocationMap\circ\outputSection
    =
    \identityMap_{\TaskSpace}\)}
  node[
    pos=0.5,
    below=-2pt,
    font=\scriptsize
  ]
  {\(\pinvSection:\ 
    \pinvSection(\taskCoordinates)
    =
    \allocationPinv\taskCoordinates\)}
  ($(X.east)!0.5!(X.north east)$);  

\draw[master return=stateblue]
  (W.north west) to[out=115,in=65,looseness=0.26]
  node[
    pos=0.5,
    below=-2pt,
    font=\scriptsize
  ]
  {\(\physicalAllocator=
    \actuatorMap^{-1}\circ\outputSection:\ 
    \physicalTaskMap\circ\physicalAllocator=
    \identityMap_{\TaskSpace}\)}
  (V.north east);

  \node[
  draw=linegray!45,
  rounded corners=1.5pt,
  fill=softgray,
  font=\scriptsize,
  align=left,
  inner sep=3pt,
  anchor=north,
  text width=0.94\textwidth
] (map-key) at ($(pa.south)+(0,16.5mm)$)
{
\renewcommand{\arraystretch}{1.12}
\begin{tabular}{@{}
  p{0.090\textwidth}
  p{0.136\textwidth}
  p{0.183\textwidth}
  !{\color{linegray!45}\vrule width 0.45pt}
  p{0.106\textwidth}
  p{0.130\textwidth}
  p{0.235\textwidth}
@{}}

\(\actuatorMap:
  \ActuatorStateSpace\to\ActuatorOutputSpace\)
&
actuator map
&
physical state to signed output
&
\(\actuatorMap^{-1}:
  \ActuatorOutputSpace\to\ActuatorStateSpace\)
&
inverse actuator map
&
signed output to physical state
\\

\(\allocationMap:
  \ActuatorOutputSpace\to\TaskSpace\)
&
allocation map
&
represented by \(\allocationMatrix\)
&
\(\physicalTaskMap:
  \ActuatorStateSpace\to\TaskSpace\)
&
physical task map
&
\(\physicalTaskMap=\allocationMap\circ\actuatorMap\)
\\

\(\outputSection:
  \TaskSpace\to\ActuatorOutputSpace\)
&
actuator-output section
&
\(\allocationMap\circ\outputSection
  =\identityMap_{\TaskSpace}\)
&
\(\pinvSection:
  \TaskSpace\to\ActuatorOutputSpace\)
&
pseudoinverse section
&
\(\pinvSection(\taskCoordinates)
  =\allocationPinv\taskCoordinates\)
\\

\(\physicalAllocator:
  \TaskSpace\to\ActuatorStateSpace\)
&
physical allocator
&
lift of \(\outputSection\) through \(\actuatorMap\)
&
\(\identityMap_{\TaskSpace}:
  \TaskSpace\to\TaskSpace\)
&
identity map
&
appears in the right-inverse identities
\\

\end{tabular}
};

\node[master panel,minimum width=0.975\textwidth,minimum height=37mm,
      below=3mm of pa] (pb) {};
\node[master title] at ($(pb.north west)+(2mm,-3mm)$)
  {(b) Differential maps and structure at a single reversal};

\node[master space=stateblue,minimum width=0.15\textwidth,minimum height=9mm]
  (TV) at ($(pb.west)+(0.12\textwidth,8mm)$)
  {\(\TangentSpace{\actuatorState}{\ActuatorStateSpace}\)};
\node[master space=outputorange,minimum width=0.15\textwidth,minimum height=9mm]
  (TX) at ($(pb.center)+(0,8mm)$)
  {\(\TangentSpace{\actuatorOutput}{\ActuatorOutputSpace}\)};
\node[master space=taskgreen,minimum width=0.15\textwidth,minimum height=9mm]
  (TW) at ($(pb.east)+(-0.12\textwidth,8mm)$)
  {\(\TangentSpace{\taskCoordinates}{\TaskSpace}\)};

\draw[master arrow=outputorange] (TV.east) -- node[above,font=\scriptsize]
  {\(\DactuatorMap(\actuatorState)\)} 
node[below,font=\scriptsize]
  {\(\DactuatorMap(\actuatorState)=2\diagonalMatrix(|\actuatorState_1|,\ldots,
  |\actuatorState_{\actuatorCount}|)\)}  
  (TX.west);
\draw[master arrow=taskgreen] (TX.east) -- node[above,font=\scriptsize]
  {\(\DallocationMap(\actuatorOutput)\)} 
node[below,font=\scriptsize]
  {\(\DallocationMap(\actuatorOutput)=  \allocationMatrix\)}
  (TW.west);

\draw[master arrow=taskgreen]
  (TV.south east) to[out=-25,in=205,looseness=0.2]
  node[
    below=3pt,
    pos=0.5,
    font=\scriptsize,
    inner sep=1pt
  ]
  (composed-differential)
  {\(
    \DphysicalTaskMap(\actuatorState)
    =
    2\allocationMatrix
    \diagonalMatrix(
      |\actuatorState_1|,
      \ldots,
      |\actuatorState_{\actuatorCount}|
    )
  \)}
  (TW.south west);

\newcommand{\nonredundantPaneWidth}{0.33\textwidth}
\newcommand{\redundantPaneWidth}{0.33\textwidth}

\newcommand{\nonredundantPanePosition}
  {($(pb.south west)+(2mm,18mm)$)}

\newcommand{\redundantPanePosition}
  {($(pb.south east)+(-2mm,18mm)$)}

\node[
  master case=obstructionred,
  anchor=north west,
  text width=\nonredundantPaneWidth,
  minimum width=\nonredundantPaneWidth
] (square) at \nonredundantPanePosition
  {\textbf{Nonredundant: structural obstruction}\\[-0.1ex]
   \(\actuatorCount=\taskDimension\),
   \(\allocationMatrix\) nonsingular\\
   \(\actuatorState_{\actuatorIndex}=0
      \Rightarrow
      \DactuatorMap(\actuatorState)
      \actuatorBasisDirection=0\)\\
   \(\matrixRank
      \DphysicalTaskMap(\actuatorState)
      =\taskDimension-1\)\\[0.2ex]
   \textcolor{obstructionred}
   {One instantaneous task direction is unavailable.}};

\node[
  master case=taskgreen,
  anchor=north east,
  text width=\redundantPaneWidth,
  minimum width=\redundantPaneWidth
] (redundant) at \redundantPanePosition
  {\textbf{Redundant: physical regularity may remain}\\[-0.1ex]
   \(\actuatorCount>\taskDimension\),
   \(\matrixRank
      \matrixWithout{\actuatorIndex}
      =\taskDimension\)\\
   \(\actuatorState_{\actuatorIndex}=0\), while the remaining
   columns span the task space\\
   \(\matrixRank
      \DphysicalTaskMap(\actuatorState)
      =\taskDimension\)\\[0.2ex]
   \textcolor{taskgreen}
   {Every task tangent remains realizable.}};

\draw[
  -{Latex[length=1.5mm]},
  semithick,
  draw=obstructionred
]
  (composed-differential.south west)
  to[out=220,in=15]
  (square.east);

\draw[
  -{Latex[length=1.5mm]},
  semithick,
  draw=taskgreen
]
  (composed-differential.south east)
  to[out=-40,in=165]
  (redundant.west);

\node[master panel,minimum width=0.975\textwidth,minimum height=41mm,
      below=3mm of pb] (pc) {};
\node[master title] at ($(pc.north west)+(2mm,-3mm)$)
  {(c) Section compatibility and the two constructions};

\node[
  master mini=obstructionred,
  anchor=north west,
  minimum height=33mm
] (bad)
  at ($(pc.north west)+(2mm,-6mm)$) {};

\node[
  master mini=remedyteal,
  right=4mm of bad,
  minimum height=33mm
] (good) {};

\node[
  master mini=remedyteal,
  right=4mm of good,
  minimum height=33mm
] (local) {};

\node[
  master mini=orthantpurple,
  right=4mm of local,
  minimum height=33mm
] (global) {};

\node[font=\scriptsize\bfseries,text=obstructionred,anchor=north]
  at ($(bad.north)+(0,-2mm)$)
  {Pseudoinverse crossing};
\begin{scope}[shift={($(bad.center)+(0,4mm)$)}]
  \draw[master axis] (-14mm,0)--(14mm,0)
    node[right,font=\tiny]{\(\normalCoordinate_{\actuatorIndex}\)};
  \draw[master axis] (0,-5mm)--(0,5mm)
    node[left,font=\tiny]{\(\pinvSectionComponent{\actuatorIndex}\)};
  \draw[very thick,obstructionred] (-11mm,-3.5mm)--(11mm,3.5mm);
  \fill[obstructionred] (0,0) circle (1.1pt);
\end{scope}
\node[master note,anchor=south,text width=0.185\textwidth]
  at ($(bad.south)+(0,2mm)$)
  {\(\pinvSectionComponent{\actuatorIndex}=\normalCoordinate_{\actuatorIndex}\),
   \((\DpinvSection)_{\actuatorIndex}\neq0\)\\
   \(\pinvAllocatorComponent{\actuatorIndex}=
   \signum(\normalCoordinate_{\actuatorIndex})
   \sqrt{|\normalCoordinate_{\actuatorIndex}|}\)\\
   \textcolor{obstructionred}{non-\(\COne\) lift; unbounded rate}};

\node[font=\scriptsize\bfseries,text=remedyteal,anchor=north]
  at ($(good.north)+(0,-2mm)$)
  {Differentially realizable crossing};
\begin{scope}[shift={($(good.center)+(0,4mm)$)}]
  \draw[master axis] (-14mm,0)--(14mm,0)
    node[right,font=\tiny]{\(\normalCoordinate_{\actuatorIndex}\)};
  \draw[master axis] (0,-5mm)--(0,5mm)
    node[left,font=\tiny]{\(\normalSectionComponent\)};
  \draw[very thick,remedyteal]
    plot[smooth] coordinates {
      (-11mm,-3.5mm)
      (-7mm,-1.6mm)
      (-3mm,-0.3mm)
      (0,0)
      (3mm,0.3mm)
      (7mm,1.6mm)
      (11mm,3.5mm)
    };
  \fill[remedyteal] (0,0) circle (1.1pt);
\end{scope}
\node[master note,anchor=south,text width=0.185\textwidth]
  at ($(good.south)+(0,2mm)$)
  {\(\normalSectionComponent=
     \normalCoordinate_{\actuatorIndex}|\normalCoordinate_{\actuatorIndex}|\positiveFactor\),
   \(\positiveFactor>0\)\\
   \((\DoutputSection)_{\actuatorIndex}=0\) at the crossing\\
   \(\normalPhysicalComponent=
     \normalCoordinate_{\actuatorIndex}\sqrt{\positiveFactor}\)\\
   \textcolor{remedyteal}{finite-rate physical reversal}};

\node[font=\scriptsize\bfseries,text=remedyteal,anchor=north]
  at ($(local.north)+(0,-2mm)$)
  {Local deformation};
\begin{scope}[shift={($(local.center)+(0,4mm)$)}]
  \draw[master axis] (-14mm,0)--(14mm,0)
    node[right,font=\tiny]{\(\normalCoordinate_{\actuatorIndex}\)};
  \draw[master axis] (0,-5mm)--(0,5mm);
  \draw[dashed,linegray] (-11mm,-3.5mm)--(11mm,3.5mm);
  \draw[very thick,remedyteal]
    plot[smooth] coordinates {
      (-11mm,-3.5mm)
      (-8mm,-2.55mm)
      (-5mm,-1.05mm)
      (-2mm,-0.175mm)
      (0,0)
      (2mm,0.175mm)
      (5mm,1.05mm)
      (8mm,2.55mm)
      (11mm,3.5mm)
    };
  \draw[remedyteal,<->] (-5mm,-2mm)--(5mm,-2mm)
    node[midway,below,font=\tiny]{\(2\deformationWidth\)};
\end{scope}
\node[master note,anchor=south,text width=0.19\textwidth]
  at ($(local.south)+(0,2mm)$)
  {\(\localSection=\allocationPinv\taskCoordinates+
    (\outputFlattening_{\deformationWidth}(\normalCoordinate_{\actuatorIndex})-
     \normalCoordinate_{\actuatorIndex})\localNullDirection_{\actuatorIndex}\)\\
   \(\allocationMap\circ\localSection=\identityMap_{\TaskSpace}\)\\
   equals \(\pinvSection\) for
   \(|\normalCoordinate_{\actuatorIndex}|\geq\deformationWidth\)\\
   \textcolor{remedyteal}{exact; local; tunable}};

\node[font=\scriptsize\bfseries,text=orthantpurple,anchor=north]
  at ($(global.north)+(0,-2mm)$)
  {Global orthant section};
\begin{scope}[shift={($(global.center)+(0,4mm)$)}]
  \draw[master axis] (-14mm,0)--(14mm,0);
  \draw[master axis] (0,-5mm)--(0,5mm);
  \draw[dashed,linegray] (-11mm,-2.5mm)--(10mm,3mm);
  \draw[-{Latex[length=1.6mm]},very thick,orthantpurple]
    (-3mm,-0.5mm)--(4mm,2.5mm);
  \fill[linegray] (-3mm,-0.5mm) circle (1.1pt);
  \fill[orthantpurple] (4mm,2.5mm) circle (1.4pt);
  \node[font=\tiny,anchor=east] at (-3.5mm,-0.5mm)
    {\(\pinvSection\)};
  \node[font=\tiny,anchor=west,text=orthantpurple] at (4.5mm,2.5mm)
    {\(\orthantSection\)};
\end{scope}
\node[master note,anchor=south,text width=0.19\textwidth]
  at ($(global.south)+(0,2mm)$)
  {\(\orthantSection=\allocationPinv\taskCoordinates+
    \orthantShift_{\orthantSmoothing,\orthantMargin}(\taskCoordinates)
    \fullSupportNullArray\)\\
   \(\fullSupportNullArray\in\matrixKernel\allocationMatrix\),
\(\fullSupportNullArray_{\actuatorIndex}\neq0\)\\
\(\signum(\fullSupportNullArray_{\actuatorIndex})
 \orthantSectionComponent\geq
 \orthantMargin|\fullSupportNullArray_{\actuatorIndex}|>0\)\\
   \textcolor{orthantpurple}{global and smooth; reversals avoided at persistent cost}};

\end{tikzpicture}

\caption{Spaces, differential structure, and section geometry.
Panel~(a) shows the continuous commutative diagram underlying the
problem formulation in Section~\ref{sec:problem-formulation}; the
regularity of its physical lift is characterized in
Section~\ref{sec:lift-validity}. Panel~(b) summarizes the structural
rank loss caused by a zero actuator state without redundancy and the
possible retention of surjectivity when
\(\matrixRank\matrixWithout{\actuatorIndex}=\taskDimension\), as
established in Section~\ref{sec:counterexample}. Panel~(c) contrasts
the pseudoinverse obstruction of Section~\ref{sec:counterexample} with
the realizable crossing geometry of Section~\ref{sec:lift-validity},
the local deformation of Section~\ref{sec:local-correction}, and the
global fixed-orthant construction of
Section~\ref{sec:orthant-section}.}
\label{fig:master-map}
\end{figure*}

\section{A Single-Reversal Obstruction}
\label{sec:counterexample}

\begin{proposition}[Pseudoinverse reversal obstruction]
\label{prop:pi-obstruction}
Let \(\taskCoordinates:\mathcal I\to\TaskSpace\) be \(\COne\), with
\(\mathcal I\subset\RealNumbers\) open, and suppose that, for some
\(\crossingTime\in\mathcal I\),
\(\taskCoordinates(\crossingTime)\in
\zeroOutputHyperplane_{\actuatorIndex}\) and
\(\crossingSlope:=
\pinvRow_{\actuatorIndex}^{\transpose}
\taskTangent(\crossingTime)\neq0\).
Then
\(\pinvAllocator:=\actuatorMap^{-1}\circ\pinvSection\) is not differentiable along \(\taskCoordinates\) at \(\crossingTime\), and
\begin{equation}
 \left|
 \pinvActuatorRateComponent{\actuatorIndex}
 \bigl(\taskCoordinates(\crossingTime+\timeOffset)\bigr)
 \right|
 \sim
 \sqrt{|\crossingSlope|}/(2\sqrt{|\timeOffset|})
 \quad\text{as }\timeOffset\to0,\quad\timeOffset\neq0.
 \label{eq:pinv-rate-divergence}
\end{equation}
\end{proposition}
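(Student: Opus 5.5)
The plan is to reduce everything to the scalar function \(g(\timeCoordinate):=\normalCoordinate_{\actuatorIndex}(\taskCoordinates(\timeCoordinate))=\pinvRow_{\actuatorIndex}^{\transpose}\taskCoordinates(\timeCoordinate)\). This is exactly the \(\actuatorIndex\)th component of \(\pinvSection\) along the curve. By \eqref{eq:lift},
\[
\pinvAllocatorComponent{\actuatorIndex}(\taskCoordinates(\timeCoordinate))=\signum(g(\timeCoordinate))\sqrt{|g(\timeCoordinate)|}.
\]
Since \(\taskCoordinates\) is \(\COne\) and \(\pinvRow_{\actuatorIndex}\) is constant, \(g\) is \(\COne\) on \(\mathcal I\). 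It satisfies \(g(\crossingTime)=0\), because \(\taskCoordinates(\crossingTime)\in\zeroOutputHyperplane_{\actuatorIndex}\), and \(\dot g(\crossingTime)=\crossingSlope\neq0\). Both claims then reduce to one-dimensional facts about \(\timeOffset\mapsto\signum(g)\sqrt{|g|}\) near a simple zero.

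For the non-differentiability, I would use the first-order expansion \(g(\crossingTime+\timeOffset)=\crossingSlope\timeOffset+\littleO(\timeOffset)\). For \(|\timeOffset|\) small, \(g(\crossingTime+\timeOffset)\) has the sign of \(\crossingSlope\timeOffset\) and satisfies \(|g(\crossingTime+\timeOffset)|=|\crossingSlope||\timeOffset|(1+\littleO(1))\). The difference quotient of \(\pinvAllocatorComponent{\actuatorIndex}\circ\taskCoordinates\) at \(\crossingTime\) is therefore
\[
\signum(\crossingSlope)\sqrt{|\crossingSlope|}\,(1+\littleO(1))/\sqrt{|\timeOffset|},
\]
and this diverges as \(\timeOffset\to0\). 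Hence the composite curve \(\pinvAllocator\circ\taskCoordinates\) is not differentiable at \(\crossingTime\), which is the meaning of "not differentiable along \(\taskCoordinates\)".

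For the rate asymptotics, I would note that for \(\timeOffset\neq0\) small, \(g(\crossingTime+\timeOffset)\neq0\). There \(\actuatorMap_{\actuatorIndex}^{-1}\) is smooth with derivative \(1/(2\sqrt{|\scalarArgument|})\) at \(\scalarArgument\neq0\). The chain rule gives
\[
\pinvActuatorRateComponent{\actuatorIndex}(\taskCoordinates(\crossingTime+\timeOffset))=\frac{\dot g(\crossingTime+\timeOffset)}{2\sqrt{|g(\crossingTime+\timeOffset)|}}.
\]
Continuity of \(\dot g\) gives \(\dot g(\crossingTime+\timeOffset)\to\crossingSlope\), and the expansion above gives \(\sqrt{|g(\crossingTime+\timeOffset)|}\sim\sqrt{|\crossingSlope|}\sqrt{|\timeOffset|}\). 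The ratio therefore satisfies
\[
|\pinvActuatorRateComponent{\actuatorIndex}|\sim|\crossingSlope|/\bigl(2\sqrt{|\crossingSlope|}\sqrt{|\timeOffset|}\bigr)=\sqrt{|\crossingSlope|}/(2\sqrt{|\timeOffset|}),
\]
which is \eqref{eq:pinv-rate-divergence}.

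The only delicate point is bookkeeping rather than a real obstacle. The \(\littleO(\timeOffset)\) remainder must be absorbed multiplicatively inside the square root, which works because \(\crossingSlope\neq0\) lets us write \(|g|=|\crossingSlope\timeOffset|\,|1+\littleO(1)|\). We also need \(\dot g\) to be continuous, not merely to exist at \(\crossingTime\), so that the numerator converges. This is exactly where the \(\COne\) hypothesis on \(\taskCoordinates\) is used.
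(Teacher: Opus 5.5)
Your proposal is correct and follows essentially the same argument as the paper. The expansion \(\pinvRow_{\actuatorIndex}^{\transpose}\taskCoordinates(\crossingTime+\timeOffset)=\crossingSlope\timeOffset+\littleO(|\timeOffset|)\) makes the difference quotient of the signed square root unbounded, and for \(\timeOffset\neq0\) the chain rule \(|\pinvActuatorRateComponent{\actuatorIndex}|=|\dot g|/(2\sqrt{|g|})\) gives the stated asymptotic rate; your explicit checks that \(g\neq0\) for small \(\timeOffset\neq0\) and that the \(\COne\) hypothesis drives \(\dot g\to\crossingSlope\) are details the paper leaves implicit.
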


\begin{proof}
Transversality and the linearity of the pseudoinverse section give
\(
 \pinvSectionComponent{\actuatorIndex}
 \bigl(\taskCoordinates(\crossingTime+\timeOffset)\bigr)
 =
\crossingSlope\timeOffset+\littleO(|\timeOffset|),
\) and
\(
 \pinvOutputRateComponent{\actuatorIndex}
 \bigl(\taskCoordinates(\crossingTime+\timeOffset)\bigr)
 =
 \crossingSlope+\littleO(1).
\)
The signed-square-root lift consequently has magnitude
\(\sqrt{|\crossingSlope\timeOffset|}+
\littleO(\sqrt{|\timeOffset|})\); its difference quotient at
\(\crossingTime\) is therefore unbounded. For
\(\timeOffset\neq0\), differentiation of the lift gives
\(
 \left|
 \pinvActuatorRateComponent{\actuatorIndex}
 \right|
 =
\left|\pinvOutputRateComponent{\actuatorIndex}\right|
 \big/
2\sqrt{\left|\pinvSectionComponent{\actuatorIndex}\right|}
 ,
\)
from which \eqref{eq:pinv-rate-divergence} follows.
\end{proof}

The obstruction can equivalently be read from
\(\actuatorOutputRateComponent{\actuatorIndex}
=2|\actuatorStateComponent{\actuatorIndex}|
\actuatorStateRateComponent{\actuatorIndex}\): a finite-rate
zero-speed crossing requires
\(\actuatorOutputRateComponent{\actuatorIndex}(\crossingTime)=0\),
whereas transversality prescribes
\(\pinvOutputRateComponent{\actuatorIndex}
(\taskCoordinates(\crossingTime))=\crossingSlope\neq0\).
Whether the remaining actuators can provide the required task variation
is determined by
\begin{equation}
 \DphysicalTaskMap(\actuatorState)
 =
 2\allocationMatrix
 \diagonalMatrix
 \left(
 |\actuatorStateComponent{1}|,
 \ldots,
 |\actuatorStateComponent{\actuatorCount}|
 \right).
 \label{eq:physical-map-differential}
\end{equation}
At
\(\actuatorStateComponent{\actuatorIndex}=0\), its
\(\actuatorIndex\)th column vanishes. The consequences differ
fundamentally between nonredundant and redundantly actuated platforms.

\begin{proposition}[Structural obstruction without redundancy]
\label{prop:square-obstruction}
Suppose that
\(\actuatorCount=\taskDimension\) and that
\(\allocationMatrix\) is nonsingular. If exactly one actuator state
component vanishes at \(\crossingState\), say
\(\crossingStateComponent{\actuatorIndex}=0\), then
\begin{equation}
 \matrixRank\DphysicalTaskMap(\crossingState)
 =
 \taskDimension-1,
 \quad
 \matrixImage\DphysicalTaskMap(\crossingState)
 =
 \linearSpan
 \left\{
 \allocationColumn_{\otherIndex}:
 \otherIndex\neq\actuatorIndex
 \right\}.
 \label{eq:square-rank-loss}
\end{equation}
Consequently, for every
\(\taskDirection\notin
\matrixImage\DphysicalTaskMap(\crossingState)\), no differentiable
actuator trajectory \(\actuatorState(\timeCoordinate)\) can satisfy
\begin{equation}
 \physicalTaskMap
 \bigl(
 \actuatorState(\timeCoordinate)
 \bigr)
 =
 \crossingTask
 +
 (\timeCoordinate-\crossingTime)\taskDirection,
 \qquad
 \actuatorState(\crossingTime)=\crossingState,
 \label{eq:untrackable-square-trajectory}
\end{equation}
on a neighborhood of \(\crossingTime\), where
\(\crossingTask=\physicalTaskMap(\crossingState)\).
\end{proposition}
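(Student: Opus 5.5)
The plan is to compute the rank and image directly from the differential formula \eqref{eq:physical-map-differential}, and then use the chain rule at \(\crossingTime\) to rule out the linear trajectory. At \(\crossingState\) we have \(\DphysicalTaskMap(\crossingState)=2\allocationMatrix\diagonalMatrix(|\crossingStateComponent{1}|,\ldots,|\crossingStateComponent{\actuatorCount}|)\), in which only the \(\actuatorIndex\)th diagonal entry vanishes. Its columns are therefore \(2|\crossingStateComponent{\otherIndex}|\allocationColumn_{\otherIndex}\) for \(\otherIndex\neq\actuatorIndex\), with nonzero scalars, together with the zero column in position \(\actuatorIndex\). This gives directly
\[
\matrixImage\DphysicalTaskMap(\crossingState)=\linearSpan\{\allocationColumn_{\otherIndex}:\otherIndex\neq\actuatorIndex\}.
\]
Since \(\allocationMatrix\) is square and nonsingular, its \(\taskDimension\) columns are linearly independent. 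Any \(\taskDimension-1\) of them are then independent as well, so the rank is exactly \(\taskDimension-1\), which establishes \eqref{eq:square-rank-loss}.

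For the second claim, I argue by contradiction. Suppose a differentiable \(\actuatorState(\timeCoordinate)\) satisfies \eqref{eq:untrackable-square-trajectory} on a neighborhood of \(\crossingTime\). The map \(\physicalTaskMap=\allocationMap\circ\actuatorMap\) is \(\COne\), because each \(\actuatorMap_{\actuatorIndex}(\scalarArgument)=\scalarArgument|\scalarArgument|\) has continuous derivative \(2|\scalarArgument|\). The chain rule therefore applies at \(\crossingTime\). Differentiating both sides of \eqref{eq:untrackable-square-trajectory} at \(\timeCoordinate=\crossingTime\) gives
\[
\taskDirection=\DphysicalTaskMap(\crossingState)\,\actuatorStateRate(\crossingTime)\in\matrixImage\DphysicalTaskMap(\crossingState),
\]
which contradicts the choice \(\taskDirection\notin\matrixImage\DphysicalTaskMap(\crossingState)\).

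I do not expect a substantive obstacle. The only points needing care are two. First, differentiability of \(\actuatorState\) is needed only at \(\crossingTime\), and that is all the chain rule requires. Second, the hypothesis that exactly one component vanishes is what makes every other scaling \(|\crossingStateComponent{\otherIndex}|\) nonzero, so that the image is the full span rather than a smaller subspace. I would also remark that such a \(\taskDirection\) exists, since the image is a proper subspace; for instance \(\taskDirection=\allocationColumn_{\actuatorIndex}\) works by the independence of the columns.
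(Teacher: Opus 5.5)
Your proposal is correct and follows essentially the same route as the paper. You read the image of \(\DphysicalTaskMap(\crossingState)=2\allocationMatrix\diagonalMatrix(|\crossingStateComponent{1}|,\ldots,|\crossingStateComponent{\actuatorCount}|)\) off its columns, using that only the \(\actuatorIndex\)th scaling vanishes and that the columns of \(\allocationMatrix\) are independent, and then differentiate the trajectory identity at \(\crossingTime\) to force \(\taskDirection\in\matrixImage\DphysicalTaskMap(\crossingState)\); your extra remarks (differentiability at \(\crossingTime\) suffices, and \(\taskDirection=\allocationColumn_{\actuatorIndex}\) shows such directions exist) are valid refinements the paper leaves implicit.
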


\begin{proof}
Since \(\allocationMatrix\) is nonsingular, its columns are linearly
independent. At \(\crossingState\), the
\(\actuatorIndex\)th diagonal entry of
\(\DactuatorMap(\crossingState)\) is zero, whereas the remaining
diagonal entries are nonzero. Therefore, the image of
\(\DphysicalTaskMap(\crossingState)\) is the span of all columns of
\(\allocationMatrix\) except
\(\allocationColumn_{\actuatorIndex}\), proving
\eqref{eq:square-rank-loss}.
If a differentiable actuator trajectory satisfied
\eqref{eq:untrackable-square-trajectory}, differentiation at
\(\crossingTime\) would give
\begin{equation}
 \DphysicalTaskMap(\crossingState)
 \actuatorStateRate(\crossingTime)
 =
 \taskDirection.
\end{equation}
This equality is impossible when
\(\taskDirection\notin
\matrixImage\DphysicalTaskMap(\crossingState)\).
\end{proof}

Proposition~\ref{prop:square-obstruction} is independent of the
allocation method. At a single-propeller zero crossing, a square
allocation matrix provides no remaining actuator direction with which
to replace the lost first-order contribution. The feasible task tangents are precisely those in \(\matrixImage\DphysicalTaskMap(\crossingState)\).

\begin{corollary}[Necessity of redundant actuation]
\label{cor:redundancy-necessary}
Under the signed-quadratic actuator model, a nonredundant fully
actuated system admits neither a global valid physical allocator nor a
global differentially realizable actuator-output section. Consequently,
redundant actuation, \(\actuatorCount>\taskDimension\), is necessary for
a static allocator capable of realizing every smooth task trajectory
through a \(\COne\) actuator command.
\end{corollary}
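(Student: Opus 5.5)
The plan is to argue by contradiction: assume \(\actuatorCount=\taskDimension\) and that a global valid physical allocator \(\physicalAllocator:\TaskSpace\to\ActuatorStateSpace\) exists. The second claim, about sections, then follows at once. If \(\outputSection\) were a global differentially realizable section, its unique lift \eqref{eq:lift} would by definition be a global valid physical allocator. So it suffices to rule out valid allocators, and the final assertion about necessity of redundancy is just the contrapositive.

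\textbf{Step 1: a reversal is forced.} Since \(\allocationMatrix\) is nonsingular, the section is unique: \(\outputSection(\taskCoordinates)=\allocationMatrix^{-1}\taskCoordinates=\pinvSection(\taskCoordinates)\), so \(\physicalAllocator=\pinvAllocator\). Pick any \(\actuatorIndex\). The zero set \(\zeroOutputHyperplane_{\actuatorIndex}\) of \(\pinvRow_{\actuatorIndex}^{\transpose}\taskCoordinates\) is a proper hyperplane, since \(\pinvRow_{\actuatorIndex}\neq0\). I would choose a point \(\crossingTask\in\zeroOutputHyperplane_{\actuatorIndex}\) that lies off all the other hyperplanes \(\zeroOutputHyperplane_{\otherIndex}\), \(\otherIndex\neq\actuatorIndex\). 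This is possible because the rows \(\pinvRow_{\otherIndex}\) of the invertible matrix \(\allocationMatrix^{-1}\) are linearly independent. Hence each intersection \(\zeroOutputHyperplane_{\actuatorIndex}\cap\zeroOutputHyperplane_{\otherIndex}\) is a proper subspace of \(\zeroOutputHyperplane_{\actuatorIndex}\), and a finite union of proper subspaces cannot cover it. Then \(\crossingState:=\physicalAllocator(\crossingTask)\) has exactly one vanishing component, namely the \(\actuatorIndex\)th.

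\textbf{Step 2: contradiction.} By Proposition~\ref{prop:square-obstruction}, \(\matrixImage\DphysicalTaskMap(\crossingState)\) has dimension \(\taskDimension-1\). So there is a direction \(\taskDirection\) outside it; for instance \(\taskDirection=\allocationColumn_{\actuatorIndex}\) works by linear independence of the columns. Consider the line \(\taskCoordinates(\timeCoordinate)=\crossingTask+(\timeCoordinate-\crossingTime)\taskDirection\) and set \(\actuatorState(\timeCoordinate):=\physicalAllocator(\taskCoordinates(\timeCoordinate))\). This is \(\COne\) because \(\physicalAllocator\) is, it satisfies \(\actuatorState(\crossingTime)=\crossingState\), and it satisfies \(\physicalTaskMap(\actuatorState(\timeCoordinate))=\taskCoordinates(\timeCoordinate)\) because \(\physicalAllocator\) is a right inverse. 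That is exactly \eqref{eq:untrackable-square-trajectory}, which Proposition~\ref{prop:square-obstruction} forbids. An equivalent shortcut is the chain rule: \(\DphysicalTaskMap(\physicalAllocator(\taskCoordinates))\,\DphysicalAllocator(\taskCoordinates)=\identityMatrix\) at \(\crossingTask\) would force \(\DphysicalTaskMap(\crossingState)\) to have rank \(\taskDimension\), contradicting \eqref{eq:square-rank-loss}.

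\textbf{Main obstacle.} The only delicate point is Step 1, ensuring that a global allocator actually passes through a state with exactly one zero component. Proposition~\ref{prop:square-obstruction} is stated only for that case. The generic-point argument above handles it, using the uniqueness of the section when \(\actuatorCount=\taskDimension\). Alternatively, the chain-rule version needs only \(\crossingStateComponent{\actuatorIndex}=0\), since any vanishing column already drops the rank. That makes the argument robust even without genericity, so I would present the chain-rule argument as primary and the trajectory form as its physical interpretation.
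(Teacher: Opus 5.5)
Your proof is correct and follows the paper's argument: differentiate \(\physicalTaskMap\circ\physicalAllocator=\identityMap_{\TaskSpace}\) at a point where \(\physicalAllocator\) has a vanishing component, and contradict the rank drop of \(\DphysicalTaskMap\) from Proposition~\ref{prop:square-obstruction}. The only difference is how you locate that point. The paper uses bijectivity of \(\physicalTaskMap\) to get \(\physicalAllocator\circ\physicalTaskMap=\identityMap_{\ActuatorStateSpace}\), so any \(\crossingState\) with exactly one zero component is attained at \(\crossingTask=\physicalTaskMap(\crossingState)\); this makes your generic-point argument in \(\zeroOutputHyperplane_{\actuatorIndex}\), which is correct as written, unnecessary.
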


\begin{proof}
Suppose that \(\actuatorCount=\taskDimension\) and that a global valid
physical allocator \(\physicalAllocator\) exists. Since
\(\allocationMap\) and \(\actuatorMap\) are bijective,
\(\physicalTaskMap\) is bijective, so
\(\physicalAllocator(\physicalTaskMap(\actuatorState))
=\actuatorState\) for every \(\actuatorState\). Differentiating
\(\physicalTaskMap\circ\physicalAllocator
=\identityMap_{\TaskSpace}\) gives
\begin{equation}
 \DphysicalTaskMap(\physicalAllocator(\taskCoordinates))
 \DphysicalAllocator(\taskCoordinates)
 =
 \identityMatrix_{\taskDimension}.
\end{equation}
At a state with exactly one zero component, the left-hand side has rank
at most \(\taskDimension-1\) by
Proposition~\ref{prop:square-obstruction}, a contradiction. Hence no
global valid physical allocator exists. Since every differentially
realizable section has a valid physical lift, no global differentially
realizable actuator-output section exists either.
\end{proof}

Redundancy removes the structural obstruction only if the remaining
actuators preserve the task-space rank. If exactly
\(\actuatorState_{\actuatorIndex}=0\), then
\begin{equation}
 \matrixRank\DphysicalTaskMap(\actuatorState)
 =
 \matrixRank\matrixWithout{\actuatorIndex}.
 \label{eq:rank-after-zero}
\end{equation}
Hence, if
\(\matrixRank\matrixWithout{\actuatorIndex}=\taskDimension\),
\(\DphysicalTaskMap\) remains surjective and every
\(\taskTangent\) is instantaneously realizable by the remaining
actuators while
\(\actuatorOutputRate_{\actuatorIndex}=0\).

\begin{corollary}[Pseudoinverse-section-induced rate singularity]
\label{cor:section-induced}
Suppose that
\(\taskCoordinates(\crossingTime)\in
\zeroOutputHyperplane_{\actuatorIndex}
\setminus
\bigcup_{\otherIndex\neq\actuatorIndex}
\zeroOutputHyperplane_{\otherIndex}\) and
\(\matrixRank\matrixWithout{\actuatorIndex}=\taskDimension\).
A transverse pseudoinverse crossing of
\(\zeroOutputHyperplane_{\actuatorIndex}\) occurs at a regular point of
the physical task map, yet its physical lift requires an unbounded
\(\actuatorIndex\)th rotor-speed derivative. Thus,
\(\pinvSection\) is not differentially realizable across the crossing,
although physical regularity permits other \(\COne\) local right inverses.
\end{corollary}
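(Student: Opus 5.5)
The argument splits into three claims: the crossing occurs at a regular point of \(\physicalTaskMap\), the pseudoinverse lift has unbounded rate there, and other \(\COne\) local right inverses exist nearby.

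\textbf{Regularity at the crossing.} Set \(\crossingTask:=\taskCoordinates(\crossingTime)\) and let \(\crossingState:=\pinvAllocator(\crossingTask)\) be the physical state selected by the pseudoinverse. By \eqref{eq:lift}, \(\crossingStateComponent{\otherIndex}=0\) exactly when \(\pinvSectionComponent{\otherIndex}(\crossingTask)=\pinvRow_{\otherIndex}^{\transpose}\crossingTask=0\), that is, when \(\crossingTask\in\zeroOutputHyperplane_{\otherIndex}\). The hypothesis \(\crossingTask\in\zeroOutputHyperplane_{\actuatorIndex}\setminus\bigcup_{\otherIndex\neq\actuatorIndex}\zeroOutputHyperplane_{\otherIndex}\) therefore says that \(\actuatorIndex\) is the only vanishing component of \(\crossingState\). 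Then \eqref{eq:rank-after-zero} gives \(\matrixRank\DphysicalTaskMap(\crossingState)=\matrixRank\matrixWithout{\actuatorIndex}=\taskDimension\), so \(\crossingState\) is a regular point of \(\physicalTaskMap\).

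\textbf{Unbounded rate of the pseudoinverse lift.} The transversality condition \(\crossingSlope=\pinvRow_{\actuatorIndex}^{\transpose}\taskTangent(\crossingTime)\neq0\) is exactly the hypothesis of Proposition~\ref{prop:pi-obstruction}. That proposition yields the divergence \eqref{eq:pinv-rate-divergence} of \(\pinvActuatorRateComponent{\actuatorIndex}\) and the non-differentiability of \(\pinvAllocator\) along the trajectory. Hence \(\pinvSection\) is not differentially realizable on any neighborhood of \(\crossingTask\).

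\textbf{Existence of other \(\COne\) local right inverses.} Because \(\physicalTaskMap\) is smooth and surjective at \(\crossingState\), the local submersion theorem (equivalently, the implicit function theorem) applies.

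\begin{itemize}
\item Choose \(\taskDimension\) linearly independent columns of \(\matrixWithout{\actuatorIndex}\), with index set \(J\). Freeze the remaining components of \(\actuatorState\) at their values in \(\crossingState\).
\item The restricted map \(\actuatorState_J\mapsto\physicalTaskMap(\actuatorState)\) has differential \(2\allocationMatrix_J\diagonalMatrix(|\actuatorState_\otherIndex|)_{\otherIndex\in J}\). This matrix is invertible at \(\crossingState\), since \(\allocationMatrix_J\) is nonsingular and \(\crossingStateComponent{\otherIndex}\neq0\) for \(\otherIndex\in J\).
\item The inverse function theorem then gives a \(\COne\) map \(\physicalAllocator\) on a neighborhood of \(\crossingTask\) with \(\physicalTaskMap\circ\physicalAllocator=\identityMap\) and \(\physicalAllocator(\crossingTask)=\crossingState\).
\end{itemize}

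This lift coincides with \(\pinvAllocator\) at \(\crossingTask\) but is \(\COne\), whereas \(\pinvAllocator\) is not. Its \(\actuatorIndex\)th component is constant and equal to \(0\), so it never crosses zero speed at all. Different choices of \(J\), or smooth perturbations of the frozen components, produce further such right inverses, including ones whose \(\actuatorIndex\)th component does reverse at finite rate.

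The only delicate point is the first step: one must check that the single-zero hypothesis applies to the state \(\pinvAllocator(\crossingTask)\) selected by the pseudoinverse, and not to an arbitrary state in the fiber of \(\crossingTask\). The correspondence between the zeros of \eqref{eq:lift} and the hyperplanes \eqref{eq:zero-output-hyperplane} settles this.
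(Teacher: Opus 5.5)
Your proposal is correct. For the two claims the paper actually proves, it follows the paper's argument exactly: surjectivity of \(\DphysicalTaskMap\) at the selected state via \eqref{eq:rank-after-zero} together with \(\matrixRank\matrixWithout{\actuatorIndex}=\taskDimension\), and the unbounded rate via Proposition~\ref{prop:pi-obstruction}. You are more careful than the paper, which simply asserts that ``at the crossing, only \(\actuatorState_{\actuatorIndex}=0\)''. Your check, through the zero set of \eqref{eq:lift}, that the single-zero condition holds at \(\pinvAllocator(\crossingTask)\) rather than at an arbitrary fiber point is the right justification.

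The genuine addition is the final clause. The paper's proof does not argue that other \(\COne\) local right inverses exist; it leaves this to regularity (the submersion theorem) and only later exhibits one constructively, through the nullspace deformation of Proposition~\ref{prop:local-flattening}. Your construction on an invertible column subset \(J\) not containing \(\actuatorIndex\) is a valid concrete instance of that submersion argument. Two small remarks on it:
\begin{itemize}
\item The solve is in fact explicit, since \(\allocationMatrix_J\) is invertible and \(\crossingState\) has no zero components on \(J\), so no inverse function theorem is needed.
\item \(\physicalTaskMap\) is only \(\COne\), not smooth, at \(\crossingState\). This is harmless, because your restricted map is smooth.
\end{itemize}

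What your route buys is immediacy. What it gives up is that your lift parks actuator \(\actuatorIndex\) at zero speed on the whole neighborhood. The paper's deformation instead agrees with \(\pinvSection\) outside a slab and reverses at the finite slope \(\pinvRow_{\actuatorIndex}^{\transpose}\taskDirection/\sqrt{\deformationWidth}\).

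Your passing claim that perturbing the frozen components yields finite-rate reversing lifts is right, but it needs more than a perturbation. You must replace the frozen value \(0\) by a \(\COne\) function \(\lambda(\taskCoordinates)\) with \(\lambda(\crossingTask)=0\) and \(D\lambda(\crossingTask)\neq0\), and then re-solve for \(\actuatorState_J\). This is not needed for the statement itself.
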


\begin{proof}
At the crossing, only
\(\actuatorState_{\actuatorIndex}=0\); hence
\eqref{eq:rank-after-zero} and
\(\matrixRank\matrixWithout{\actuatorIndex}=\taskDimension\) imply that
\(\DphysicalTaskMap\) is surjective. Proposition~\ref{prop:pi-obstruction}
shows that the lift of \(\pinvSection\) is nevertheless not
differentiable there.
\end{proof}

Redundancy alone does not guarantee retention of complete task
authority; at a single reversal the required structural condition is
\(\matrixRank\matrixWithout{\actuatorIndex}=\taskDimension\). Without
redundancy, the zero-speed crossing removes a physical task direction
independently of the allocator. When the rank condition holds, the
physical task map remains regular, but differential realizability
additionally requires the selected section to redistribute the
first-order task variation consistently with the zero output
derivative of the reversing actuator.

\section{Differential Realizability of Static Sections}
\label{sec:lift-validity}

\begin{proposition}[First-order necessary condition]
\label{prop:first-order}
Let \(\physicalAllocator:\TaskSpace\to\ActuatorStateSpace\) be a valid physical allocator and
\(\outputSection=\actuatorMap\circ\physicalAllocator\). For every
\(\actuatorIndex\) and \(\crossingTask\),
\begin{equation}
 \outputSection_{\actuatorIndex}(\crossingTask)=0
 \quad\Longrightarrow\quad
 (\DoutputSection)_{\actuatorIndex}(\crossingTask)=0.
 \label{eq:first-order-condition}
\end{equation}
\end{proposition}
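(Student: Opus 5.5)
The plan is to apply the chain rule to the factorization \(\outputSection=\actuatorMap\circ\physicalAllocator\) componentwise. Since \(\actuatorMap\) acts componentwise, \(\outputSection_{\actuatorIndex}=\actuatorMap_{\actuatorIndex}\circ\physicalAllocator_{\actuatorIndex}\). The scalar map \(\actuatorMap_{\actuatorIndex}(\scalarArgument)=\scalarArgument|\scalarArgument|\) is \(\COne\) on all of \(\RealNumbers\), with derivative \(2|\scalarArgument|\). Because \(\physicalAllocator\) is valid, it is \(\COne\), so the composition \(\outputSection_{\actuatorIndex}\) is \(\COne\). Its differential is
\[
 (\DoutputSection)_{\actuatorIndex}(\taskCoordinates)
 =
 2\bigl|\physicalAllocator_{\actuatorIndex}(\taskCoordinates)\bigr|\,
 \DphysicalAllocator_{\actuatorIndex}(\taskCoordinates).
\]
This is the row form of the identity \(\DoutputSection=\DactuatorMap(\physicalAllocator)\DphysicalAllocator\), using the diagonal structure of \(\DactuatorMap\) recorded in \eqref{eq:physical-map-differential} and Fig.~\ref{fig:master-map}(b).

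The second step converts the hypothesis on \(\outputSection\) into one on \(\physicalAllocator\). Since \(\actuatorMap_{\actuatorIndex}\) is a bijection with \(\actuatorMap_{\actuatorIndex}^{-1}(0)=\{0\}\), the assumption \(\outputSection_{\actuatorIndex}(\crossingTask)=0\) gives \(\physicalAllocator_{\actuatorIndex}(\crossingTask)=0\). Substituting this into the displayed formula makes the factor \(2|\physicalAllocator_{\actuatorIndex}(\crossingTask)|\) vanish. The other factor \(\DphysicalAllocator_{\actuatorIndex}(\crossingTask)\) is a finite covector because \(\physicalAllocator\) is differentiable. Hence \((\DoutputSection)_{\actuatorIndex}(\crossingTask)=0\), which is \eqref{eq:first-order-condition}.

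The only point that needs care is justifying the chain rule at a point where \(\actuatorMap_{\actuatorIndex}\) is not a local diffeomorphism. This is not a real difficulty, because the chain rule requires only differentiability of \(\actuatorMap_{\actuatorIndex}\), not invertibility of its derivative. Differentiability at zero follows from \(|\scalarArgument|\scalarArgument/\scalarArgument=|\scalarArgument|\to0\). If a derivative-free check is preferred, one can bound the increment directly. Since \(\physicalAllocator\) is differentiable, \(|\physicalAllocator_{\actuatorIndex}(\crossingTask+\taskDirection)|\le C\|\taskDirection\|\) for small \(\taskDirection\). It follows that \(|\outputSection_{\actuatorIndex}(\crossingTask+\taskDirection)|\le C^2\|\taskDirection\|^2=\littleO(\|\taskDirection\|)\). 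This gives zero differential directly, and the argument is consistent with the reading of Proposition~\ref{prop:pi-obstruction} stated after its proof.
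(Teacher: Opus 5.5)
Your proposal is correct and follows the paper's proof. As there, $\outputSection_{\actuatorIndex}(\crossingTask)=0$ forces $\physicalAllocator_{\actuatorIndex}(\crossingTask)=0$, and the chain rule applied to $\outputSection_{\actuatorIndex}=\physicalAllocator_{\actuatorIndex}|\physicalAllocator_{\actuatorIndex}|$ gives $(\DoutputSection)_{\actuatorIndex}(\crossingTask)=2|\physicalAllocator_{\actuatorIndex}(\crossingTask)|(\DphysicalAllocator)_{\actuatorIndex}(\crossingTask)=0$; your check that $\actuatorMap_{\actuatorIndex}$ is differentiable at zero and your quadratic increment bound are valid details the paper leaves implicit.
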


\begin{proof}
The equality
\(\outputSection_{\actuatorIndex}(\crossingTask)=0\) implies
\(\physicalAllocator_{\actuatorIndex}(\crossingTask)=0\). Since
\(\outputSection_{\actuatorIndex}
=\physicalAllocator_{\actuatorIndex}
|\physicalAllocator_{\actuatorIndex}|\), the chain rule gives
\(
(\DoutputSection)_{\actuatorIndex}(\crossingTask)
 =
2|\physicalAllocator_{\actuatorIndex}(\crossingTask)|
(\DphysicalAllocator)_{\actuatorIndex}(\crossingTask)
 =
 0.
\)
\end{proof}
Consequently, every differentiable task trajectory through
\(\crossingTask\) satisfies
\(\frac{\mathrm d}{\mathrm d\timeCoordinate}
\outputSection_{\actuatorIndex}(\taskCoordinates(\timeCoordinate))
|_{\timeCoordinate=\crossingTime}
=(\DoutputSection)_{\actuatorIndex}(\crossingTask)
\taskTangent(\crossingTime)=0\), independently of its task tangent.

The pseudoinverse violates \eqref{eq:first-order-condition} at every \(\crossingTask\in \zeroOutputHyperplane_{\actuatorIndex}\), because \((\DpinvSection)_{\actuatorIndex}=\pinvRow_{\actuatorIndex}^\transpose\neq0\). Thus, \(\actuatorMap^{-1}\circ\pinvSection\) is not \(\COne\) across any nontrivial zero-output hyperplane.

\begin{theorem}[Exact lift-validity condition]
\label{thm:exact-lift}
Let
\(\outputSection:\TaskSpace\to\ActuatorOutputSpace\) be an
actuator-output section. Then \(\outputSection\) is differentially
realizable if and only if each scalar map
\(\taskCoordinates\mapsto
\signum(\outputSection_{\actuatorIndex}(\taskCoordinates))
\sqrt{|\outputSection_{\actuatorIndex}(\taskCoordinates)|}\)
is \(\COne\). In that case, its valid physical lift
\(\physicalAllocator=\actuatorMap^{-1}\circ\outputSection\) satisfies
\begin{equation}
 \DphysicalTaskMap(\physicalAllocator(\taskCoordinates))
 \DphysicalAllocator(\taskCoordinates)
 =
 \identityMatrix_{\taskDimension},
 \label{eq:right-inverse-differential}
\end{equation}
so \(\DphysicalTaskMap(\physicalAllocator(\taskCoordinates))\) has full
row rank at every selected point.
\end{theorem}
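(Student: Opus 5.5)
The plan splits into the equivalence, which is essentially definitional, and the differential identity, which follows from the chain rule.

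\emph{Equivalence.} By the definition of differential realizability, \(\outputSection\) is differentially realizable exactly when its lift \(\physicalAllocator=\actuatorMap^{-1}\circ\outputSection\) is \(\COne\) as a map \(\TaskSpace\to\ActuatorStateSpace=\RealNumbers^{\actuatorCount}\). I will use two elementary facts:
\begin{itemize}
\item a map into \(\RealNumbers^{\actuatorCount}\) is \(\COne\) if and only if each of its scalar components is \(\COne\);
\item by \eqref{eq:lift}, the \(\actuatorIndex\)th component of the lift is precisely \(\taskCoordinates\mapsto\signum(\outputSection_{\actuatorIndex}(\taskCoordinates))\sqrt{|\outputSection_{\actuatorIndex}(\taskCoordinates)|}\).
\end{itemize}
Both directions of the equivalence then follow immediately. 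I will also note that the argument needs no regularity of \(\outputSection\) itself. Rather, \(\COne\) of the lift forces \(\outputSection=\actuatorMap\circ\physicalAllocator\) to be \(\COne\), because \(\actuatorMap\) is \(\COne\), with \(\DactuatorMap(\actuatorState)=2\diagonalMatrix(|\actuatorState_1|,\ldots,|\actuatorState_{\actuatorCount}|)\). This remark is worth recording, since the chain rule below requires differentiability of the composition.

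\emph{Differential identity.} Assume \(\physicalAllocator\) is \(\COne\). The map \(\physicalTaskMap=\allocationMap\circ\actuatorMap\) is \(\COne\) on all of \(\ActuatorStateSpace\): \(\actuatorMap_{\actuatorIndex}(\actuatorState_{\actuatorIndex})=\actuatorState_{\actuatorIndex}|\actuatorState_{\actuatorIndex}|\) has continuous derivative \(2|\actuatorState_{\actuatorIndex}|\), including at zero, and \(\allocationMap\) is linear. The right-inverse identity \(\physicalTaskMap\circ\physicalAllocator=\identityMap_{\TaskSpace}\) holds on all of \(\TaskSpace\), because \(\allocationMatrix\outputSection(\taskCoordinates)=\taskCoordinates\) and \(\actuatorMap\circ\actuatorMap^{-1}=\identityMap\). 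Differentiating this identity with the chain rule at any \(\taskCoordinates\) gives \eqref{eq:right-inverse-differential}, in the same way as in the proof of Corollary~\ref{cor:redundancy-necessary}.

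\emph{Full row rank.} The product \(\DphysicalTaskMap(\physicalAllocator(\taskCoordinates))\DphysicalAllocator(\taskCoordinates)\) equals \(\identityMatrix_{\taskDimension}\), which has rank \(\taskDimension\). Since \(\matrixRank(XY)\le\matrixRank X\) and \(\DphysicalTaskMap\) is \(\taskDimension\times\actuatorCount\), we get \(\matrixRank\DphysicalTaskMap(\physicalAllocator(\taskCoordinates))=\taskDimension\), that is, full row rank at every selected point.

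I expect no real obstacle in this argument. The only points needing care are checking that \(\physicalTaskMap\) is genuinely \(\COne\) at the zero set, so that the chain rule is legitimate there, and observing that \(\outputSection\) need not be assumed differentiable in the equivalence. As a consistency check, I may also note that Proposition~\ref{prop:first-order} is recovered as a consequence of this theorem.
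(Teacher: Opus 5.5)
Your proposal is correct and follows essentially the same route as the paper. The equivalence is the definition of differential realizability read componentwise through the unique lift \eqref{eq:lift}, and \eqref{eq:right-inverse-differential} comes from differentiating \(\physicalTaskMap\circ\physicalAllocator=\allocationMap\circ\outputSection=\identityMap_{\TaskSpace}\); your extra checks (that \(\physicalTaskMap\) is \(\COne\) on the zero set so the chain rule applies, and the rank inequality giving full row rank) only make explicit what the paper's short proof leaves implicit.
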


\begin{proof}
Bijectivity of \(\actuatorMap\) makes \eqref{eq:lift} the unique pointwise lift. If this lift is \(\COne\), then \(\physicalTaskMap\circ\physicalAllocator =\allocationMap\circ\outputSection =\identityMap_{\TaskSpace}\), and differentiation gives \eqref{eq:right-inverse-differential}. Conversely, any physical allocator inducing \(\outputSection\) must coincide with \eqref{eq:lift}.
\end{proof}

The theorem is exact but implicit. The following normal form is a useful local sufficient condition.

\begin{proposition}[Local reversal normal form]
\label{prop:normal-form}
Let \(\auxiliaryScalar,\positiveFactor:\TaskSpace\to\RealNumbers\) be
\(\COne\), with \(\positiveFactor>0\). If, locally,
\(\outputSection_{\actuatorIndex}(\taskCoordinates)
=\auxiliaryScalar(\taskCoordinates)
|\auxiliaryScalar(\taskCoordinates)|
\positiveFactor(\taskCoordinates)\), then its unique physical lift is
\begin{equation}
 \physicalAllocator_{\actuatorIndex}(\taskCoordinates)
 =
\bigl( \actuatorMap_{\actuatorIndex}^{-1} \circ\outputSection_{\actuatorIndex} \bigr)(\taskCoordinates)
 =
 \auxiliaryScalar(\taskCoordinates)
 \sqrt{\positiveFactor(\taskCoordinates)},
 \label{eq:local-normal-form-lift}
\end{equation}
and is \(\COne\). If
\(\auxiliaryScalar(\crossingTask)=0\) and
\(D\auxiliaryScalar(\crossingTask)\neq0\), then the physical actuator
crosses zero regularly, while
\((\DoutputSection)_{\actuatorIndex}(\crossingTask)=0\).
\end{proposition}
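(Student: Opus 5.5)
The plan is to compute the lift directly from the explicit inverse in \eqref{eq:lift}. Write \(\outputSection_{\actuatorIndex}=\auxiliaryScalar|\auxiliaryScalar|\positiveFactor\). Since \(\positiveFactor>0\), we have
\begin{equation*}
\signum(\outputSection_{\actuatorIndex})=\signum(\auxiliaryScalar),
\qquad
|\outputSection_{\actuatorIndex}|=\auxiliaryScalar^2\positiveFactor .
\end{equation*}
Therefore
\begin{equation*}
\actuatorMap_{\actuatorIndex}^{-1}(\outputSection_{\actuatorIndex})
=\signum(\auxiliaryScalar)\,|\auxiliaryScalar|\sqrt{\positiveFactor}
=\auxiliaryScalar\sqrt{\positiveFactor},
\end{equation*}
which is \eqref{eq:local-normal-form-lift}. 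The identity holds pointwise, including where \(\auxiliaryScalar=0\), since both sides vanish there. Uniqueness of this lift follows from bijectivity of \(\actuatorMap_{\actuatorIndex}\), as already used in Theorem~\ref{thm:exact-lift}.

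For regularity, the map \(\positiveFactor\mapsto\sqrt{\positiveFactor}\) is smooth on \((0,\infty)\), so \(\sqrt{\positiveFactor}\) is \(\COne\) with
\begin{equation*}
D\sqrt{\positiveFactor}=\frac{D\positiveFactor}{2\sqrt{\positiveFactor}} .
\end{equation*}
The product \(\auxiliaryScalar\sqrt{\positiveFactor}\) of two \(\COne\) functions is then \(\COne\). By the scalar criterion of Theorem~\ref{thm:exact-lift}, the \(\actuatorIndex\)th component is therefore realizable on the neighborhood where the normal form holds.

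At a point \(\crossingTask\) with \(\auxiliaryScalar(\crossingTask)=0\), the product rule gives
\begin{equation*}
\DphysicalAllocator_{\actuatorIndex}(\crossingTask)=\sqrt{\positiveFactor(\crossingTask)}\,D\auxiliaryScalar(\crossingTask).
\end{equation*}
This is nonzero when \(D\auxiliaryScalar(\crossingTask)\neq0\), so \(\physicalAllocator_{\actuatorIndex}\) has \(\crossingTask\) as a regular zero and the physical actuator crosses zero with finite, nonzero rate. Its zero set is locally a \(\COne\) hypersurface, by the implicit function theorem if one wants that reading. The vanishing of \((\DoutputSection)_{\actuatorIndex}(\crossingTask)\) follows from Proposition~\ref{prop:first-order} applied to this \(\COne\) lift. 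Alternatively, one can differentiate \(\outputSection_{\actuatorIndex}=\physicalAllocator_{\actuatorIndex}|\physicalAllocator_{\actuatorIndex}|\) to get \(2|\physicalAllocator_{\actuatorIndex}|\DphysicalAllocator_{\actuatorIndex}=0\) at \(\crossingTask\).

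The only delicate point is the pointwise sign bookkeeping \(\signum(\auxiliaryScalar)|\auxiliaryScalar|=\auxiliaryScalar\) at and near the zero set, where a naive chain rule through the square root is unavailable. This is handled by working with the closed form rather than differentiating \(\actuatorMap_{\actuatorIndex}^{-1}\). The statement is local, so all claims are asserted only on the neighborhood where the normal form holds.
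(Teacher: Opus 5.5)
Your proof is correct and follows the paper's argument. The paper likewise notes only that the signed-square-root inverse of \(\auxiliaryScalar|\auxiliaryScalar|\positiveFactor\) is \(\auxiliaryScalar\sqrt{\positiveFactor}\), which is \(\COne\) because \(\positiveFactor>0\). Your sign bookkeeping, the formula \(\DphysicalAllocator_{\actuatorIndex}(\crossingTask)=\sqrt{\positiveFactor(\crossingTask)}\,D\auxiliaryScalar(\crossingTask)\neq0\), and the direct verification that \((\DoutputSection)_{\actuatorIndex}(\crossingTask)=0\) spell out details the paper leaves implicit.
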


Indeed, the signed-square-root inverse of
\(\auxiliaryScalar|\auxiliaryScalar|\positiveFactor\) is
\(\auxiliaryScalar\sqrt{\positiveFactor}\), which is \(\COne\) because
\(\positiveFactor>0\). The proposed normal form is therefore sufficient:
it enforces both the required vanishing order and the sign change,
whereas the first-order condition
\((\DoutputSection)_{\actuatorIndex}=0\) alone is only necessary.

\section{Geometric and Optimization Interpretation}
\label{sec:geometry}

The pseudoinverse defines a Euclidean section of the linear map
\(\allocationMap:\ActuatorOutputSpace\to\TaskSpace\). Its fibers are
affine subspaces, and minimizing the strictly convex quadratic cost
\(\|\actuatorOutput\|^2\) selects the unique point orthogonal to
\(\matrixKernel\allocationMatrix\). The resulting section is the
global linear map
\(\pinvSection(\taskCoordinates)=\allocationPinv\taskCoordinates\).

The difficulty arises because \(\actuatorOutput\) is an auxiliary
actuator-output array rather than a regular physical coordinate at
reversal. Away from the coordinate hyperplanes,
\(\DactuatorMap(\actuatorState)
=2\diagonalMatrix(
|\actuatorStateComponent{1}|,\ldots,
|\actuatorStateComponent{\actuatorCount}|)\)
is invertible. At
\(\actuatorStateComponent{\actuatorIndex}=0\), its
\(\actuatorIndex\)th diagonal entry vanishes, so \(\actuatorMap\) is
not a local diffeomorphism. Smoothness of \(\pinvSection\) therefore
does not necessarily imply smoothness of its physical lift
\(\actuatorMap^{-1}\circ\pinvSection\).

The same singular transformation affects the minimum-norm
interpretation. The cost \(\|\actuatorOutput\|^2\) becomes
\(\|\actuatorMap(\actuatorState)\|^2
=\sum_{\actuatorIndex=1}^{\actuatorCount}
\actuatorStateComponent{\actuatorIndex}^4\), rather than
\(\|\actuatorState\|^2\). Moreover, the pullback of the Euclidean metric
of \(\ActuatorOutputSpace\) is
\begin{equation}
 \pullbackMetric_{\actuatorState}
 (\actuatorVariation,\actuatorVariation)
 =
 \|\DactuatorMap(\actuatorState)\actuatorVariation\|^2
 =
 4\sum_{\actuatorIndex=1}^{\actuatorCount}
 \actuatorStateComponent{\actuatorIndex}^2
 (\actuatorVariation_{\actuatorIndex})^2,
 \label{eq:pullback}
\end{equation}
which degenerates whenever an actuator-state component vanishes. The
quartic cost, the degenerate pullback, and the invalid transverse
crossing thus originate from the same singularity of
\(\DactuatorMap\).

Optimization directly in \(\ActuatorStateSpace\) avoids the subsequent
composition with \(\actuatorMap^{-1}\), but replaces affine fibers with
the nonlinear fibers of \(\physicalTaskMap\). A pointwise optimizer may
then be nonunique, may change branch, or may fail to depend smoothly on
\(\taskCoordinates\). Even local smooth optimizer branches do not by
themselves provide a global static allocator.

Similarly, a pointwise pseudoinverse of
\(\DphysicalTaskMap(\actuatorState)\) defines a local velocity lift,
not a static section. Its integration may depend on the initial point
on the fiber and on the path followed in task space. Obtaining a global
static allocator additionally requires integrability, path
independence, and global single-valuedness.

\section{Global Orthant Sections}
\label{sec:orthant-section}

Regular global sections may nevertheless exist. The following
construction separates existence from the localized design objective
considered later.

\begin{definition}[Full-support nullspace element]
An element
\(\fullSupportNullArray\in\matrixKernel\allocationMatrix\) has full
support if
\(\fullSupportNullArray_{\actuatorIndex}\neq0\) for every
\(\actuatorIndex\). Its sign pattern identifies the open orthant
\begin{equation}
 \left\{
 \actuatorOutput\in\ActuatorOutputSpace:
 \signum(\fullSupportNullArray_{\actuatorIndex})
 \actuatorOutput_{\actuatorIndex}>0
 \ \text{for every }\actuatorIndex
 \right\}.
 \label{eq:nullspace-orthant}
\end{equation}
\end{definition}

The existence of such an element is natural when the platform retains
full actuation after any single-propeller failure. Indeed, if
\(\matrixRank\matrixWithout{\actuatorIndex}=\taskDimension\) for every
\(\actuatorIndex\), then the nullspace is not contained in any
coordinate hyperplane and contains a full-support element.

\begin{proposition}[Smooth fixed-orthant section]
\label{prop:orthant}
Assume that \(\matrixKernel\allocationMatrix\) contains a full-support
element \(\fullSupportNullArray\). For
\(\orthantSmoothing,\orthantMargin>0\), define
\[
 \auxiliaryScalar_{\actuatorIndex}(\taskCoordinates)
 =
 -
 (\pinvSection(\taskCoordinates))_{\actuatorIndex}\big/
 \fullSupportNullArray_{\actuatorIndex},
\]
\begin{equation}
 \orthantShift_{\orthantSmoothing,\orthantMargin}(\taskCoordinates)
 =
 \orthantSmoothing\log\!\left(
 \sum_{\actuatorIndex=1}^{\actuatorCount}
 \exponentialBase^{
 \auxiliaryScalar_{\actuatorIndex}(\taskCoordinates)/
 \orthantSmoothing}
 \right)
 +
 \orthantMargin,
 \label{eq:alpha}
\end{equation}
and
\(
 \outputSection_{\orthantSmoothing,\orthantMargin}(\taskCoordinates)
 =
 \pinvSection(\taskCoordinates)
 +
 \orthantShift_{\orthantSmoothing,\orthantMargin}(\taskCoordinates)
 \fullSupportNullArray.
\)
Then
\(\allocationMap\circ
\outputSection_{\orthantSmoothing,\orthantMargin}
=\identityMap_{\TaskSpace}\) and
\begin{equation}
 \signum(\fullSupportNullArray_{\actuatorIndex})
 \outputSection_{\orthantSmoothing,\orthantMargin,\actuatorIndex}
 (\taskCoordinates)
 \geq
 \orthantMargin
 |\fullSupportNullArray_{\actuatorIndex}|
 >0
 \label{eq:fixed-orthant-margin}
\end{equation}
for every \(\taskCoordinates\) and \(\actuatorIndex\). Consequently,
\begin{equation}
 \physicalAllocator_{\orthantSmoothing,\orthantMargin,\actuatorIndex}
 (\taskCoordinates)
 =
 \signum(\fullSupportNullArray_{\actuatorIndex})
 \sqrt{
 \left|
 \outputSection_{\orthantSmoothing,\orthantMargin,\actuatorIndex}
 (\taskCoordinates)
 \right|
 }
 \label{eq:fixed-orthant-lift}
\end{equation}
defines a smooth global right inverse of \(\physicalTaskMap\), and
\(\DphysicalTaskMap(
\physicalAllocator_{\orthantSmoothing,\orthantMargin}
(\taskCoordinates))\)
has rank \(\taskDimension\) everywhere.
\end{proposition}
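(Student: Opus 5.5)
The plan is to verify the four claims in the order they are stated: exactness, the margin \eqref{eq:fixed-orthant-margin}, smoothness of the lift \eqref{eq:fixed-orthant-lift}, and full rank.

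\emph{Exactness.} Since \(\allocationMatrix\fullSupportNullArray=0\), we get
\(\allocationMatrix\outputSection_{\orthantSmoothing,\orthantMargin}(\taskCoordinates)=\allocationMatrix\allocationPinv\taskCoordinates+\orthantShift_{\orthantSmoothing,\orthantMargin}(\taskCoordinates)\allocationMatrix\fullSupportNullArray=\taskCoordinates\).
The identity \(\allocationMatrix\allocationPinv=\identityMatrix_{\taskDimension}\) holds because \(\allocationMatrix\) has full row rank. The shift \(\orthantShift_{\orthantSmoothing,\orthantMargin}\) is a log-sum-exp of linear functions of \(\taskCoordinates\), so it is real-analytic, and hence so is \(\outputSection_{\orthantSmoothing,\orthantMargin}\).

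\emph{Margin.} I would use the elementary soft-max bound \(\orthantSmoothing\log\sum_{\otherIndex}\exponentialBase^{\auxiliaryScalar_{\otherIndex}/\orthantSmoothing}\geq\max_{\otherIndex}\auxiliaryScalar_{\otherIndex}\geq\auxiliaryScalar_{\actuatorIndex}\), which holds because the sum dominates each of its terms. This gives \(\orthantShift_{\orthantSmoothing,\orthantMargin}(\taskCoordinates)\geq\auxiliaryScalar_{\actuatorIndex}(\taskCoordinates)+\orthantMargin\) for every \(\actuatorIndex\). Next, write the component as
\[
\outputSection_{\orthantSmoothing,\orthantMargin,\actuatorIndex}=\fullSupportNullArray_{\actuatorIndex}\bigl(\orthantShift_{\orthantSmoothing,\orthantMargin}-\auxiliaryScalar_{\actuatorIndex}\bigr),
\]
which follows directly from the definition \(\auxiliaryScalar_{\actuatorIndex}=-(\pinvSection)_{\actuatorIndex}/\fullSupportNullArray_{\actuatorIndex}\). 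Multiplying by \(\signum(\fullSupportNullArray_{\actuatorIndex})\) turns the prefactor into \(|\fullSupportNullArray_{\actuatorIndex}|\) and gives
\(\signum(\fullSupportNullArray_{\actuatorIndex})\outputSection_{\orthantSmoothing,\orthantMargin,\actuatorIndex}=|\fullSupportNullArray_{\actuatorIndex}|(\orthantShift_{\orthantSmoothing,\orthantMargin}-\auxiliaryScalar_{\actuatorIndex})\geq\orthantMargin|\fullSupportNullArray_{\actuatorIndex}|\).
This is strictly positive because \(\fullSupportNullArray\) has full support and \(\orthantMargin>0\).

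\emph{Lift.} Because \(\outputSection_{\orthantSmoothing,\orthantMargin,\actuatorIndex}\) keeps the fixed sign \(\signum(\fullSupportNullArray_{\actuatorIndex})\) everywhere, the general lift \eqref{eq:lift} reduces to \eqref{eq:fixed-orthant-lift}. Its argument \(|\outputSection_{\orthantSmoothing,\orthantMargin,\actuatorIndex}|\) equals \(\signum(\fullSupportNullArray_{\actuatorIndex})\outputSection_{\orthantSmoothing,\orthantMargin,\actuatorIndex}\), which is smooth and bounded below by \(\orthantMargin|\fullSupportNullArray_{\actuatorIndex}|>0\). The square root is smooth on \((0,\infty)\), so each lift component is smooth on all of \(\TaskSpace\). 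Theorem~\ref{thm:exact-lift} then applies: \(\physicalAllocator_{\orthantSmoothing,\orthantMargin}\) is a valid global right inverse, and \eqref{eq:right-inverse-differential} shows that \(\DphysicalTaskMap(\physicalAllocator_{\orthantSmoothing,\orthantMargin}(\taskCoordinates))\) has rank \(\taskDimension\).

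The same rank conclusion can be read off directly from \eqref{eq:physical-map-differential}. Every component \(\physicalAllocator_{\orthantSmoothing,\orthantMargin,\actuatorIndex}\) is nonzero, so the diagonal factor is invertible, and the rank equals that of \(\allocationMatrix\), namely \(\taskDimension\).

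The only step with real content is the margin inequality, and specifically the rewriting of the component as \(\fullSupportNullArray_{\actuatorIndex}(\orthantShift_{\orthantSmoothing,\orthantMargin}-\auxiliaryScalar_{\actuatorIndex})\) with the sign bookkeeping for negative \(\fullSupportNullArray_{\actuatorIndex}\). The rest is routine.
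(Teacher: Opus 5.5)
Your proof is correct and takes the same route as the paper. Exactness follows from $\allocationMatrix\fullSupportNullArray=0$, and the margin follows from the log-sum-exp bound $\orthantShift_{\orthantSmoothing,\orthantMargin}\geq\auxiliaryScalar_{\actuatorIndex}+\orthantMargin$, which you make explicit through $\outputSection_{\orthantSmoothing,\orthantMargin,\actuatorIndex}=\fullSupportNullArray_{\actuatorIndex}(\orthantShift_{\orthantSmoothing,\orthantMargin}-\auxiliaryScalar_{\actuatorIndex})$. Smoothness of the lift comes from the strict separation from the orthant boundary, and full rank from the invertible diagonal factor in $\DphysicalTaskMap$; your extra rank argument via Theorem~\ref{thm:exact-lift} is also valid but not needed.
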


\begin{proof}
Exactness follows from
\(\allocationMap\circ\pinvSection=\identityMap_{\TaskSpace}\) and
\(\allocationMatrix\fullSupportNullArray=0\). The log-sum-exp
inequality gives
\(\orthantShift_{\orthantSmoothing,\orthantMargin}
\geq
\auxiliaryScalar_{\actuatorIndex}+\orthantMargin\), which yields
\eqref{eq:fixed-orthant-margin}. The section therefore remains
strictly inside the orthant determined by the signature of
\(\fullSupportNullArray\), so its physical lift is smooth and has no
zero components. Hence
\(\DactuatorMap(
\physicalAllocator_{\orthantSmoothing,\orthantMargin})\)
is invertible and
\(\matrixRank\DphysicalTaskMap=\taskDimension\).
\end{proof}

\begin{remark}[Displacement along the nullspace]
\label{rem:orthant-displacement}
For fixed \(\taskCoordinates\), the value
\(\auxiliaryScalar_{\actuatorIndex}(\taskCoordinates)\) is the
displacement along \(\fullSupportNullArray\) at which the
\(\actuatorIndex\)th output reaches zero. Hence,
\(\max_{\actuatorIndex}
\auxiliaryScalar_{\actuatorIndex}(\taskCoordinates)\) is the smallest
displacement that reaches the closure of the selected orthant. The
log-sum-exp gives the smooth bound
\begin{equation}
\begin{aligned}
 \max_{\actuatorIndex}
 \auxiliaryScalar_{\actuatorIndex}
 +\orthantMargin
 \ \leq\
 \orthantShift_{\orthantSmoothing,\orthantMargin}
 \ \leq\
 \max_{\actuatorIndex}
 \auxiliaryScalar_{\actuatorIndex}
 +\orthantSmoothing\log\actuatorCount
 +\orthantMargin.
\end{aligned}
\label{eq:orthant-shift-bounds}
\end{equation}
Thus, \(\orthantSmoothing\) controls the excess displacement introduced
by smoothing and recovers the maximum as
\(\orthantSmoothing\to0^+\), whereas \(\orthantMargin\) prescribes the
strict separation from the orthant boundary. Moreover,
\[
 D\orthantShift_{\orthantSmoothing,\orthantMargin}
 =
 \sum_{\actuatorIndex=1}^{\actuatorCount}
 \orthantWeight_{\actuatorIndex}
 D\auxiliaryScalar_{\actuatorIndex},
 \quad
 \orthantWeight_{\actuatorIndex}
 =
 \frac{
 \exponentialBase^{
 \auxiliaryScalar_{\actuatorIndex}/\orthantSmoothing}
 }{
 \sum_{\otherIndex=1}^{\actuatorCount}
 \exponentialBase^{
 \auxiliaryScalar_{\otherIndex}/\orthantSmoothing}
 },
\]
so its differential is a softmax-weighted combination of the
coordinate thresholds.
\end{remark}

This construction avoids every reversal by remaining inside the
orthant determined by the sign pattern of
\(\fullSupportNullArray\). It is not a local correction: its nullspace
displacement is generally nonzero everywhere and, at
\(\taskCoordinates=0\), equals
\((\orthantSmoothing\log\actuatorCount+\orthantMargin)
\fullSupportNullArray\). It therefore introduces persistent
task-preserving internal actuation, may depart substantially from
minimum-effort operation, and suppresses the sign-changing capability
for which reversible propellers were introduced. Its purpose here is
to show that the pseudoinverse obstruction is not forced by the
physical task map.

\section{A Local Pseudoinverse Deformation}
\label{sec:local-correction}

Fix \(\actuatorIndex\) and consider an isolated crossing point
\begin{equation}
 \crossingTask
 \in
 \zeroOutputHyperplane_{\actuatorIndex}
 \setminus
 \bigcup_{\otherIndex\neq\actuatorIndex}
 \zeroOutputHyperplane_{\otherIndex}.
 \label{eq:isolated-crossing}
\end{equation}
Thus, only the \(\actuatorIndex\)th pseudoinverse output vanishes at
\(\crossingTask\). Assume additionally that
\(\matrixRank\matrixWithout{\actuatorIndex}=\taskDimension\), so the
remaining actuator outputs preserve complete task authority.

Let
\(\nullProjector
=\identityMatrix_{\actuatorCount}
-\allocationPinv\allocationMatrix\) be the orthogonal projector
onto \(\matrixKernel\allocationMatrix\), let
\(\canonicalBasis_{\actuatorIndex}\) denote the
\(\actuatorIndex\)th canonical basis element of
\(\RealNumbers^{\actuatorCount}\), and define
\begin{equation}
 \localNullDirection_{\actuatorIndex}
 =
 \frac{
 \nullProjector\canonicalBasis_{\actuatorIndex}
 }{
 \canonicalBasis_{\actuatorIndex}^{\transpose}
 \nullProjector\canonicalBasis_{\actuatorIndex}
 }.
 \label{eq:hi}
\end{equation}
The denominator is positive under
\(\matrixRank\matrixWithout{\actuatorIndex}=\taskDimension\);
moreover,
\(\allocationMatrix\localNullDirection_{\actuatorIndex}=0\) and
\((\localNullDirection_{\actuatorIndex})_{\actuatorIndex}=1\).

Let \(\deformationWidth>0\), and choose an odd \(\COne\) function
\(\physicalFlattening_{\deformationWidth}:\RealNumbers\to\RealNumbers\)
such that
\(\physicalFlattening_{\deformationWidth}(\scalarArgument)
=\scalarArgument/\sqrt{\deformationWidth}\) for
\(|\scalarArgument|\leq\deformationWidth/2\),
\(\physicalFlattening_{\deformationWidth}(\scalarArgument)
=\signum(\scalarArgument)\sqrt{|\scalarArgument|}\) for
\(|\scalarArgument|\geq\deformationWidth\), with a fixed \(\COne\)
interpolation between the two regions. Set
\(\outputFlattening_{\deformationWidth}
=\physicalFlattening_{\deformationWidth}
|\physicalFlattening_{\deformationWidth}|\). Then
\(\outputFlattening_{\deformationWidth}(\scalarArgument)
=\scalarArgument|\scalarArgument|/\deformationWidth\) near zero,
\(\outputFlattening_{\deformationWidth}(\scalarArgument)
=\scalarArgument\) for
\(|\scalarArgument|\geq\deformationWidth\), and the interpolation can
be fixed so that
\begin{equation}
 \left|
 \outputFlattening_{\deformationWidth}(\scalarArgument)
 -
 \scalarArgument
 \right|
 \leq
 \flatteningDeviationBound\deformationWidth
 \label{eq:flattening-deviation-bound}
\end{equation}
for a constant \(\flatteningDeviationBound\) independent of
\(\deformationWidth\).

Recalling the normal-variable view
\(\normalCoordinate_{\actuatorIndex}=\pinvSectionComponent{\actuatorIndex}\)
in \eqref{eq:zero-output-hyperplane}, define
\begin{equation}
 \outputSection_{\deformationWidth,\actuatorIndex}^{\mathrm{loc}}
 (\taskCoordinates)
:=
 \pinvSection(\taskCoordinates)
 +
 \bigl(
 \outputFlattening_{\deformationWidth}
 (\normalCoordinate_{\actuatorIndex}(\taskCoordinates))
 -
 \normalCoordinate_{\actuatorIndex}(\taskCoordinates)
 \bigr)
 \localNullDirection_{\actuatorIndex}.
 \label{eq:local-section}
\end{equation}

\begin{proposition}[Exact deformation near an isolated crossing]
\label{prop:local-flattening}
The section \eqref{eq:local-section} is globally defined, satisfies
\(\allocationMap\circ
\outputSection_{\deformationWidth,\actuatorIndex}^{\mathrm{loc}}
=\identityMap_{\TaskSpace}\), and agrees with \(\pinvSection\) whenever
\(|\normalCoordinate_{\actuatorIndex}(\taskCoordinates)|
\geq\deformationWidth\). Its \(\actuatorIndex\)th coordinate is
\(\outputFlattening_{\deformationWidth}
(\normalCoordinate_{\actuatorIndex}(\taskCoordinates))\), and its
corresponding physical component is
\(\physicalFlattening_{\deformationWidth}
(\normalCoordinate_{\actuatorIndex}(\taskCoordinates))\).

Moreover, there exist a neighborhood
\(\crossingNeighborhood\) of \(\crossingTask\) and
\(\deformationWidth_0>0\) such that, for every
\(0<\deformationWidth\leq\deformationWidth_0\), the complete physical
lift of
\(\outputSection_{\deformationWidth,\actuatorIndex}^{\mathrm{loc}}\)
is \(\COne\) on \(\crossingNeighborhood\).
\end{proposition}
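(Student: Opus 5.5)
The first part of Proposition~\ref{prop:local-flattening} follows directly from the construction. We use the properties of \(\localNullDirection_{\actuatorIndex}\) stated after \eqref{eq:hi}, namely \(\allocationMatrix\localNullDirection_{\actuatorIndex}=0\) and \((\localNullDirection_{\actuatorIndex})_{\actuatorIndex}=1\). Applying \(\allocationMatrix\) to \eqref{eq:local-section} kills the correction term and leaves \(\allocationMatrix\allocationPinv\taskCoordinates=\taskCoordinates\), so the section is exact. It is globally defined because \(\normalCoordinate_{\actuatorIndex}\) is linear and \(\outputFlattening_{\deformationWidth}\) is defined on all of \(\RealNumbers\). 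Since \(\outputFlattening_{\deformationWidth}(\scalarArgument)=\scalarArgument\) for \(|\scalarArgument|\geq\deformationWidth\), the correction vanishes there, and the section agrees with \(\pinvSection\). The \(\actuatorIndex\)th coordinate equals \(\normalCoordinate_{\actuatorIndex}+(\outputFlattening_{\deformationWidth}(\normalCoordinate_{\actuatorIndex})-\normalCoordinate_{\actuatorIndex})\cdot1=\outputFlattening_{\deformationWidth}(\normalCoordinate_{\actuatorIndex})\). Because \(\outputFlattening_{\deformationWidth}=\physicalFlattening_{\deformationWidth}|\physicalFlattening_{\deformationWidth}|\), and \(\actuatorMap_{\actuatorIndex}^{-1}\) is the inverse of \(\scalarArgument\mapsto\scalarArgument|\scalarArgument|\), the lift is \(\physicalFlattening_{\deformationWidth}\circ\normalCoordinate_{\actuatorIndex}\). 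This is \(\COne\) as a composition of a \(\COne\) function with a linear map.

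For the local regularity claim, we work componentwise through Theorem~\ref{thm:exact-lift}. The \(\actuatorIndex\)th component is already settled globally. For \(\otherIndex\neq\actuatorIndex\), the isolation hypothesis \eqref{eq:isolated-crossing} gives \(\separationMargin:=\min_{\otherIndex\neq\actuatorIndex}|\pinvRow_{\otherIndex}^{\transpose}\crossingTask|>0\). Choose \(\crossingNeighborhood\) to be a ball around \(\crossingTask\) small enough that \(|\pinvRow_{\otherIndex}^{\transpose}\taskCoordinates|\geq\separationMargin/2\) on \(\crossingNeighborhood\) for every \(\otherIndex\neq\actuatorIndex\). The \(\otherIndex\)th component of the deformed section is
\[
\pinvRow_{\otherIndex}^{\transpose}\taskCoordinates+\bigl(\outputFlattening_{\deformationWidth}(\normalCoordinate_{\actuatorIndex})-\normalCoordinate_{\actuatorIndex}\bigr)(\localNullDirection_{\actuatorIndex})_{\otherIndex}.
\]
By \eqref{eq:flattening-deviation-bound}, the perturbation is bounded in absolute value by \(\flatteningDeviationBound\deformationWidth\,\|\localNullDirection_{\actuatorIndex}\|_\infty\), uniformly in \(\taskCoordinates\). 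Set \(\deformationWidth_0=\separationMargin/(4\flatteningDeviationBound\|\localNullDirection_{\actuatorIndex}\|_\infty)\), or any positive value if the denominator vanishes. Then for \(\deformationWidth\leq\deformationWidth_0\), every component \(\otherIndex\neq\actuatorIndex\) satisfies \(|\outputSection_{\otherIndex}|\geq\separationMargin/4>0\) on \(\crossingNeighborhood\), with constant sign.

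The section itself is \(\COne\), because \(\outputFlattening_{\deformationWidth}\) is \(\COne\) as a product of \(\COne\) functions: \(|\physicalFlattening_{\deformationWidth}|\) is only Lipschitz, but \(\physicalFlattening_{\deformationWidth}|\physicalFlattening_{\deformationWidth}|\) has continuous derivative \(2|\physicalFlattening_{\deformationWidth}|\physicalFlattening_{\deformationWidth}'\). Hence on \(\crossingNeighborhood\) each lift component \(\signum(\outputSection_{\otherIndex})\sqrt{|\outputSection_{\otherIndex}|}\) is the composition of a \(\COne\) nonvanishing function with the square root on a fixed-sign region, so it is \(\COne\). Theorem~\ref{thm:exact-lift} then gives the claim.

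The main obstacle is the uniformity: \(\crossingNeighborhood\) must not depend on \(\deformationWidth\). This is exactly why the \(\deformationWidth\)-independent constant \(\flatteningDeviationBound\) in \eqref{eq:flattening-deviation-bound} is needed. I would make sure the neighborhood is chosen from \(\separationMargin\) alone, before fixing \(\deformationWidth_0\). Note that the rank condition \(\matrixRank\matrixWithout{\actuatorIndex}=\taskDimension\) enters only through the well-definedness of \(\localNullDirection_{\actuatorIndex}\).
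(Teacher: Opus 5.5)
Your proposal is correct and follows the paper's proof. Exactness comes from $A h_i=0$, and the $i$th coordinate from $(h_i)_i=1$. You then fix, independently of $\varepsilon$, a neighborhood of $w_*$ on which the other pseudoinverse outputs are bounded away from zero, and use the uniform bound \eqref{eq:flattening-deviation-bound} to choose $\varepsilon_0$. You differ only in making the constants explicit and adding the valid check that $\phi_\varepsilon=\psi_\varepsilon|\psi_\varepsilon|$ is $C^1$.
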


\begin{proof}
Exactness follows from
\(\allocationMatrix\localNullDirection_{\actuatorIndex}=0\). Since
\((\localNullDirection_{\actuatorIndex})_{\actuatorIndex}=1\), the
\(\actuatorIndex\)th corrected coordinate equals
\(\outputFlattening_{\deformationWidth}
(\normalCoordinate_{\actuatorIndex})\), whose physical lift is
\(\physicalFlattening_{\deformationWidth}
(\normalCoordinate_{\actuatorIndex})\).

By \eqref{eq:isolated-crossing}, the remaining pseudoinverse outputs
are nonzero at \(\crossingTask\). Hence, after shrinking
\(\crossingNeighborhood\), there exists \(\separationMargin>0\) such
that
\[
 \left|
 \pinvSectionComponent{\otherIndex}(\taskCoordinates)
 \right|
 \geq 2\separationMargin
 \quad
 \text{for all }\taskCoordinates\in\crossingNeighborhood,
 \quad
 \otherIndex\neq\actuatorIndex.
\]
Equation~\eqref{eq:flattening-deviation-bound} shows that the correction
of every other coordinate is at most
\(\flatteningDeviationBound\deformationWidth
|(\localNullDirection_{\actuatorIndex})_{\otherIndex}|\).
Choosing \(\deformationWidth_0\) sufficiently small keeps all these
coordinates separated from zero on \(\crossingNeighborhood\).
Their signed-square-root lifts are therefore smooth, while the
\(\actuatorIndex\)th lift is smooth by construction.
\end{proof}

Thus, on \(\crossingNeighborhood\), the physical lift of
\(\outputSection_{\deformationWidth,\actuatorIndex}^{\mathrm{loc}}\)
is a \(\COne\) right inverse of \(\physicalTaskMap\), so
Proposition~\ref{prop:first-order} and
Theorem~\ref{thm:exact-lift} apply there. In particular, at
\(\crossingTask\),
\[
 \differential
 \bigl(
 \outputSection_{\deformationWidth,\actuatorIndex}^{\mathrm{loc}}
 \bigr)_{\actuatorIndex}(\crossingTask)=0,
 \quad
 \allocationMatrix
 \differential
 \outputSection_{\deformationWidth,\actuatorIndex}^{\mathrm{loc}}
 (\crossingTask)
 =
 \identityMatrix_{\taskDimension}.
\]
The remaining actuator outputs generate the complete first-order task
variation. For \(\taskDirection\in\TaskSpace\), along
\(\taskCoordinates(\timeCoordinate)
=\crossingTask+\timeCoordinate\taskDirection\), the physical crossing
slope is
\(\pinvRow_{\actuatorIndex}^{\transpose}\taskDirection/
\sqrt{\deformationWidth}\). Reducing \(\deformationWidth\) narrows the
modified slab and its deviation from the pseudoinverse, but increases
the required crossing rate.

The formula \eqref{eq:local-section} is an exact global section, but
Proposition~\ref{prop:local-flattening} certifies its complete physical
regularity only near isolated crossings of
\(\zeroOutputHyperplane_{\actuatorIndex}\). Intersections of several
zero-output hyperplanes require coordinated deformations of multiple
components and are not resolved by this single-coordinate
construction.

\section{Numerical Evaluation}
\label{sec:numerical-evaluation}

We evaluate the three sections using the reversible octorotor of
\cite{Brescianini2016ICRA,Brescianini2018Mechatronics}. Its allocation
matrix is constructed from the reported cubic geometry, with columns
\([\actuatorDirection_{\actuatorIndex}^{\transpose}\ 
(\actuatorPosition_{\actuatorIndex}\times
\actuatorDirection_{\actuatorIndex})^{\transpose}]^{\transpose}\),
\(\|\actuatorPosition_{\actuatorIndex}\|=0.184\,\mathrm{m}\), and rotor
directions parameterized by
\(a_{\mathrm g}=1/2+1/\sqrt{12}\), \(b_{\mathrm g}=1/2-1/\sqrt{12}\), and \(c_{\mathrm g}=1/\sqrt{3}\). Rotor drag moments are neglected consistently with
the configuration-design model. The resulting
\(\allocationMatrix\in\RealNumbers^{6\times8}\) has rank six,
\(\dim\matrixKernel\allocationMatrix=2\), and
\(\matrixRank\matrixWithout{\actuatorIndex}=6\) for every
\(\actuatorIndex\).

For the reported vehicle mass \(m_{\mathrm v}=0.892\,\mathrm{kg}\), we
rotate the body-frame gravity force about the second axis at
\(0.08\,\mathrm{rad/s}\) and add a small constant torque to separate
the crossings induced by geometric symmetry:
\(
 \taskCoordinates(\timeCoordinate)=
 \begin{bmatrix}
  m_{\mathrm v}g\sin\theta(\timeCoordinate)\,0\,
  m_{\mathrm v}g\cos\theta(\timeCoordinate)\,
  0.20\,-0.12\,0.08
 \end{bmatrix}^{\transpose},
\)
\( \theta(\timeCoordinate)=\theta_*+0.08\timeCoordinate\)
where forces and torques are in \(\mathrm N\) and \(\mathrm{N\,m}\),
respectively. We select
\(\theta_*=-0.69\,\mathrm{rad}\), for which the first pseudoinverse
output crosses zero at \(\timeCoordinate=0\). The crossing is isolated
and transverse, with
\(\min_{\otherIndex\neq1}
|\pinvSectionComponent{\otherIndex}(\crossingTask)|
=0.37\,\mathrm N\) and
\(\dot{\pinvSection}_{,1}(0)
=-0.26\,\mathrm{N/s}\).

The local section uses \(\deformationWidth=0.30\,\mathrm N\), with the
linear physical profile on
\(|\normalCoordinate_1|\leq\deformationWidth/2\). The fixed-orthant
section uses
\(\orthantSmoothing=\orthantMargin=0.10\,\mathrm N\) and a
full-support kernel element normalized by
\(\|\fullSupportNullArray\|_\infty=1\) and selected to maximize
\(\min_{\actuatorIndex}|\fullSupportNullArray_{\actuatorIndex}|\).
To report physical rotor speed, we use
\(c_f=4.73\times10^{-6}\,\mathrm{N\,s^2/rad^2}\), inferred from the
reported \(0.15\,\mathrm N\) minimum thrust magnitude and
\(178\,\mathrm{rad/s}\) minimum measurable speed, and plot
\(\actuatorState_{\actuatorIndex}/\sqrt{c_f}\).

Figure~\ref{fig:numerical-results} illustrates the non-differentiable pseudoinverse crossing and the finite-slope local crossing in panel~(a). Panel~(b) confirms the predicted \(\Delta\timeCoordinate^{-1/2}\) growth of the pseudoinverse finite-difference rate, whereas the local rate remains finite.
All three sections are pointwise exact to numerical precision. Only the local and fixed-orthant sections, however, produce differentiable physical commands along the evaluated trajectory; the pseudoinverse requires an unbounded rate at the reversal.
For
\(J_{\actuatorOutput}
=\int\|\actuatorOutput(\timeCoordinate)\|^2
\,\mathrm d\timeCoordinate\), the normalized costs are \(1\),
\(1.0004\), and \(4.4861\) for the pseudoinverse, local, and
fixed-orthant sections, respectively.

\begin{figure}[t]
 \centering
 \includegraphics[width=\columnwidth]
 {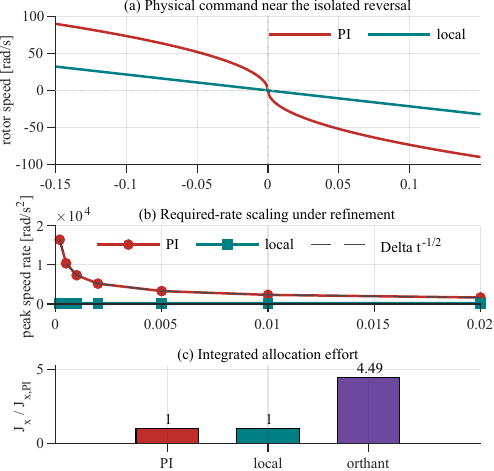}
 \caption{Numerical comparison of the pseudoinverse, local, and
fixed-orthant sections: physical rotor-speed command near reversal,
peak finite-difference rotor-speed rate under temporal refinement, and
normalized actuator-output effort.}
 \label{fig:numerical-results}
\end{figure}

\section{Conclusion}
\label{sec:discussion-conclusion}

Differential realizability complements pointwise control-allocation
requirements such as exactness, effort, bounds, attainable sets, and
failure accommodation \cite{bodson2002,johansen2013,santos2022}. An
actuator-output section is physically admissible only when its lift
through \(\actuatorMap\) has the required regularity. Existing methods
may avoid the identified obstruction through idle-speed biases,
hysteresis, dynamics, or explicit low-speed constraints; what matters
is whether the resulting physical command avoids incompatible
transverse zero crossings.

The fixed-orthant construction proves that global valid allocators can
exist under a full-support nullspace condition, but requires persistent
task-preserving internal actuation. The local deformation instead
preserves the pseudoinverse outside a tunable slab and provides a valid
physical lift near an isolated reversal. Coordinated regularization at
intersections of zero-output hyperplanes remains open. Overall,
redundant actuation governs whether complete task authority can survive
a reversal, while differential realizability governs whether a selected
section uses that authority through a physically trackable command.

\bibliographystyle{IEEEtran}
\bibliography{refs_arxiv}

\end{document}